\documentclass{article}
\usepackage{iclr2027_conference,times}

\usepackage{amsmath,amssymb,amsthm,mathtools,bm}
\usepackage{booktabs,array,tabularx,multirow}
\usepackage{algorithm}
\usepackage[noend]{algpseudocode}
\usepackage{graphicx}
\usepackage{thmtools}
\usepackage{thm-restate}
\usepackage{xcolor}
\usepackage{colortbl}
\usepackage{microtype}
\usepackage{hyperref}
\usepackage[nameinlink,capitalise,noabbrev]{cleveref}
\usepackage{url}
\usepackage{wrapfig}
\usepackage{float}
\usepackage{enumitem}
\usepackage{titletoc}
\usepackage[most]{tcolorbox}

\newtcolorbox{prompt}[1][]{
enhanced,
breakable,
colback=gray!4,
colframe=black!65,
coltitle=white,
fonttitle=\small\bfseries,
fontupper=\small,
boxrule=0.5pt,
arc=1.5pt,
left=5pt,
right=5pt,
top=4pt,
bottom=4pt,
before skip=6pt,
after skip=8pt,
#1
}

\declaretheorem[numberwithin=section,name=Theorem]{theorem}

\declaretheorem[sibling=theorem,name=Lemma]{lemma}
\declaretheorem[sibling=theorem,name=Corollary]{corollary}

\crefname{theorem}{Theorem}{Theorems}
\Crefname{theorem}{Theorem}{Theorems}
\crefname{proposition}{Proposition}{Propositions}
\Crefname{proposition}{Proposition}{Propositions}
\crefname{lemma}{Lemma}{Lemmas}
\Crefname{lemma}{Lemma}{Lemmas}
\crefname{corollary}{Corollary}{Corollaries}
\Crefname{corollary}{Corollary}{Corollaries}
\crefname{definition}{Definition}{Definitions}
\Crefname{definition}{Definition}{Definitions}
\crefname{assumption}{Assumption}{Assumptions}
\Crefname{assumption}{Assumption}{Assumptions}
\crefname{equation}{Equation}{Equations}
\Crefname{equation}{Equation}{Equations}
\crefname{algorithm}{Algorithm}{Algorithms}
\Crefname{algorithm}{Algorithm}{Algorithms}
\crefname{table}{Table}{Tables}
\Crefname{table}{Table}{Tables}
\crefname{figure}{Figure}{Figures}
\Crefname{figure}{Figure}{Figures}
\crefname{section}{Section}{Sections}
\Crefname{section}{Section}{Sections}

\newcommand{\E}{\mathbb{E}}

\newcommand{\1}{\mathbf{1}}

\newcommand{\abs}[1]{\left\lvert #1 \right\rvert}

\title{
Rethinking Training--Inference Mismatch in\\
LLM Reinforcement Learning:\\
Where It Arises and How to Correct It
}

\author{
\textbf{Tianrun Yu}$^1$,\ 
\textbf{Kaixiang Zhao}$^1$,\ 
\textbf{Shangzhe Li}$^2$,\ 
\textbf{Yuxiao Yang}$^2$,\ 
\textbf{Porter Jenkins}$^1$,\\
\textbf{Weitong Zhang}$^2$,\ 
\textbf{Taylor W. Killian}$^{1 *}$\\
$^1$Brigham Young University,\ 
$^2$University of North Carolina at Chapel Hill\\
{$^*$Corresponding author.
\tt \small
\{tianruny, kzhao2\}@byu.edu,\ 
tkillian@cs.byu.edu}\\
{\small }
}

\iclrfinalcopy

\begin{document}

\maketitle

\begingroup
\renewcommand{\thefootnote}{}
\footnotetext{Code is available at \url{https://github.com/kzhao5/CIS-RL}.}
\endgroup

\lhead{}

\begin{abstract}
We study training--inference mismatch in reinforcement learning with
verifiable rewards (RLVR) for large language models, where rollouts are
sampled by an inference engine while gradients are computed by a training engine,
and the two engines assign different probabilities to the same tokens.
To account for this discrepancy in policy updates, we introduce calibrated
importance sampling (CIS). CIS is motivated by an empirically supported
logit-displacement characterization that expresses the mismatch as an additive
displacement $\varepsilon_t$ in log-odds, determined by the per-logit
perturbation before the softmax, whose distribution is approximately invariant
to token confidence. This characterization motivates a confidence-aware
truncation: large positive displacements are truncated at a single constant
threshold, which maps back to an importance-ratio cap that tightens as token
confidence increases. Theoretically, we show that CIS replaces the unbounded
second moment that governs the error of exact importance sampling with a
term bounded by a constant, at the cost of a bias controlled by the truncated
excess. In evaluation across three mixture-of-experts models and five mathematical
reasoning benchmarks, CIS achieves the highest five-benchmark average on all
three models among the evaluated baselines. Diagnostic analyses show that CIS
places less truncation bias on low-confidence tokens than truncated importance
sampling, while upward clipping of small importance weights reduces held-out
accuracy.
\end{abstract}

\section{Introduction}
\label{sec:introduction}

Reinforcement learning with verifiable rewards (RLVR) has become a dominant
paradigm for improving the reasoning abilities of large language models
(LLMs) \citep{shao2024deepseekmath, deepseekai2025deepseekr1, yu2025dapo}.
To improve throughput, current RL post-training frameworks
decouple rollout generation from gradient computation and assign them to an
inference engine and a training engine,
respectively~\citep{sheng2025hybridflow,hu2025openrlhf,fu2025areal}.
Although the
two engines load the same parameters, differences in their numerical
implementations lead them to assign different probabilities to the same token.
As a result, training that is nominally on-policy becomes
off-policy~\citep{yao2025offpolicy,qi2025fp16,zheng2025stabilizing}.
This
\emph{training--inference mismatch} is particularly severe for mixture-of-experts
models, where it can cause training instability or even
collapse~\citep{ling2025everystep,zheng2025gspo,ma2025r3,zheng2025stabilizing}.

The standard remedy for this mismatch is importance sampling, which reweights
the gradient of each token by the ratio between the probabilities that the
training and inference engines assign to the sampled
token~\citep{yao2025offpolicy,liu2025rlcollapse,zheng2025stabilizing}.
However, occasional sharp disagreements between the two engines on individual
tokens can make this importance ratio extremely
large~\citep{zhao2025icepop,ma2025r3}.
Exact correction can therefore yield gradient estimates with uncontrolled
variance~\citep{ionides2008truncated,metelli2018pois,liu2025rlcollapse}.
Truncating the importance sampling factor controls this variance by introducing the bias--variance trade-off. Understanding this bias--variance trade-off and designing better truncation methods has emerged as a new question to answer~\citep{ionides2008truncated,liu2025rlcollapse}.

\begin{figure}[t]
    \centering
    \includegraphics[width=0.9\linewidth]{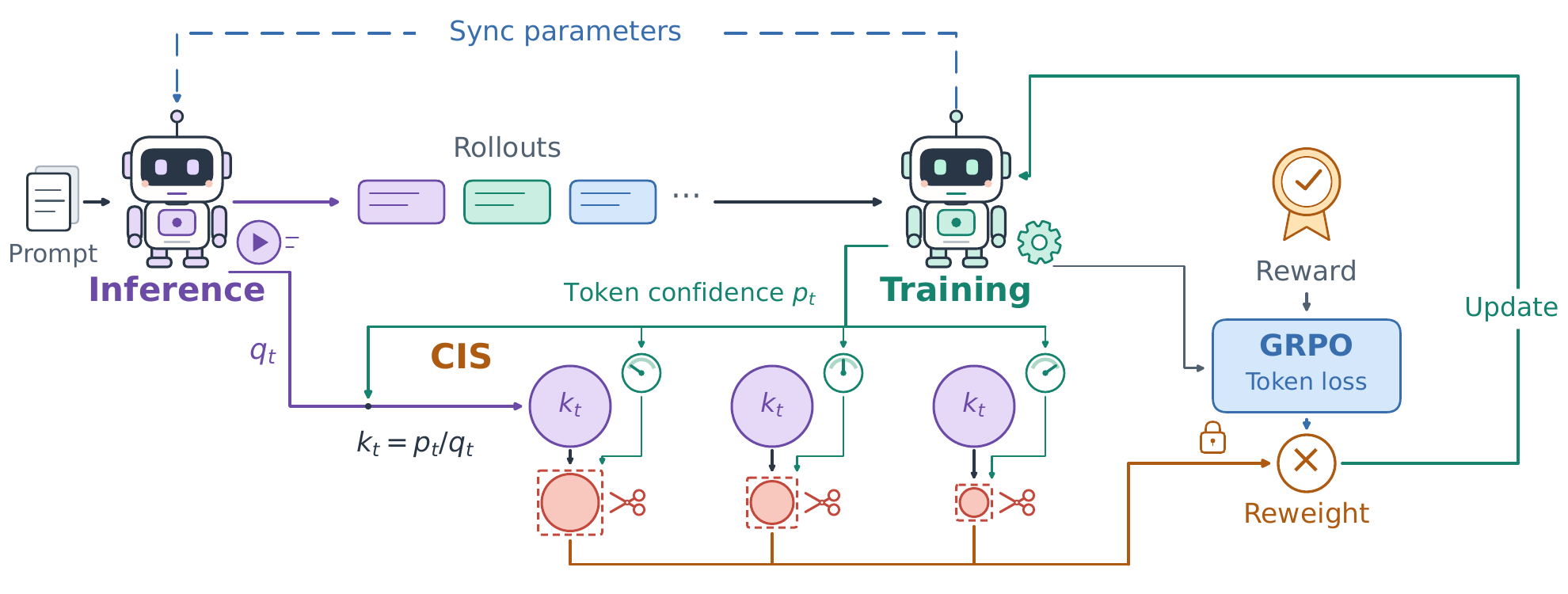}
    \vspace{-1.0\baselineskip}
    \caption{
        The inference and training engines assign different probabilities
        to the same tokens.
        CIS truncates the importance ratios with confidence-dependent
        caps, which are tighter for more confident tokens.
        The truncated ratios reweight the token-level
        GRPO loss~\citep{shao2024deepseekmath}.
    }
    \label{fig:cis_pipeline}
\end{figure}

Existing methods largely rely on heuristic rules to truncate or mask this
ratio. At the token level, TIS caps
the ratio at a fixed upper bound~\citep{yao2025offpolicy,qi2025fp16},
IcePop masks tokens whose ratio
falls outside a fixed interval~\citep{zhao2025icepop,ling2025everystep},
and KPop masks tokens by thresholding the KL divergence between the
probabilities of the inference and training engines~\citep{guo2026kpop,ling2026lingring}.
At the sequence level, existing methods truncate or
reject the ratio of the entire response~\citep{liu2025rlcollapse},
or replace the token-level
objective with a sequence-level one~\citep{zheng2025gspo}.
Other work addresses the problem through numerical precision and runs the
entire pipeline in FP16~\citep{qi2025fp16}. Although these methods are
effective to varying degrees, all of their rules act directly on the
importance ratio or on quantities computed directly from the probabilities
of the two engines.
This ratio, however, mixes two factors: the actual discrepancy between the two
engines and the confidence of the token itself. A discrepancy of the same size
moves the ratio far from one on low-confidence tokens but barely changes it
on high-confidence tokens. As a result, fixed thresholds act mainly on
low-confidence tokens~\citep{guo2026kpop,ling2026lingring}, and the
truncation bias concentrates on these tokens. This motivates our
question:

\begin{center}
\emph{Can we design a principled mismatch correction grounded in the \\
statistical structure of the training--inference mismatch itself?}
\end{center}

We address this question by proposing
\textbf{CIS} (\underline{\textbf{C}}alibrated
\underline{\textbf{I}}mportance \underline{\textbf{S}}ampling), which is
illustrated in \Cref{fig:cis_pipeline}. CIS first separates the two factors in
the importance ratio, which can be written exactly as a function of the token
confidence and a component that reflects only the discrepancy between the two
engines. Our measurements show that the distribution of this discrepancy
component varies only slightly with token confidence, so anomalous
discrepancies can be identified with the same threshold at every confidence
level. CIS applies a single constant threshold to this component, which
truncates anomalously large discrepancies at their source. When mapped back to
the ratio, this threshold imposes a cap on the importance weight of each token,
which depends on the token confidence and becomes tighter as confidence
increases. This design determines where the truncation bias falls.
Discrepancies of the same size are treated in the same way at every confidence
level, so the truncation bias no longer concentrates on low-confidence tokens.
Our contributions are as follows:

\begin{itemize}[leftmargin=*]
\item We decompose the importance ratio into token confidence and a
discrepancy component, and find in large-scale analyses that the
distribution of the discrepancy component barely varies with token confidence.
With mixture-of-experts models, routing disagreement between engines
substantially amplifies the heavy tail of this distribution.

\item Building on this characterization, we propose CIS, which truncates the
discrepancy component rather than the ratio, yielding a ratio cap that
tightens with token confidence. Theoretically, we prove that CIS bounds the
potentially explosive second-moment term of exact correction, at the cost of a
bias controlled by the truncated excess.

\item Experiments on three mixture-of-experts models and five mathematical
reasoning benchmarks validate the effectiveness of CIS. Diagnostic analyses
show that, unlike a fixed ratio cap, CIS does not concentrate its
truncation bias on low-confidence tokens and attains a lower overall bias.
\end{itemize}

\section{Preliminaries}
\label{sec:prelim}

\subsection{Training--inference mismatch}
\label{sec:prelim_mismatch}

In current RL post-training frameworks, rollouts are generated by an inference
engine (e.g., vLLM and SGLang), while gradients are computed by a separate
training engine (e.g., FSDP and Megatron). Both engines load the same parameters
$\theta$, but differences in kernel implementations, reduction orders, and
numerical precision lead them to assign different probabilities to the same
token. We write $h_t := (x, y_{<t})$ for the prefix at position $t$. Then
$\pi_{\mathrm{train}}(y_t\mid h_t;\theta)\ne\pi_{\mathrm{infer}}(y_t\mid h_t;\theta)$,
so rollouts that are nominally on-policy are in fact off-policy with respect to
the training engine. The problem is more severe for mixture-of-experts models,
where routing is a discrete top-$k$ decision and a small perturbation can change
which experts are activated.

For the token-level GRPO surrogate evaluated on the observed rollout prefixes,
correcting the conditional distribution of each sampled token introduces an
importance ratio:
\begin{equation}
\mathcal J(\theta)
=
\E_{x\sim\mathcal D,\;\{y_i\}_{i=1}^{G}\sim\pi_{\mathrm{infer}}(\cdot\mid x;\theta_{\mathrm{old}})}
\left[
\frac{1}{G}\sum_{i=1}^{G}\frac{1}{\abs{y_i}}\sum_{t=1}^{\abs{y_i}}
k_{i,t}\,
\min\Big(
\rho_{i,t}\hat A_{i,t},\;
\operatorname{clip}\big(\rho_{i,t}\big)\hat A_{i,t}
\Big)
\right],
\label{eq:objective}
\end{equation}
where $h_{i,t}=(x,y_{i,<t})$, $\theta_{\mathrm{old}}$ is the parameter snapshot at
rollout time, $\hat A_{i,t}$ is the group-normalized advantage, and
$\operatorname{clip}$ restricts $\rho_{i,t}$ to
$[1-c_{\mathrm{low}},\,1+c_{\mathrm{high}}]$. Let
$p_{i,t}=\pi_{\mathrm{train}}(y_{i,t}\mid h_{i,t};\theta_{\mathrm{old}})$ and
$q_{i,t}=\pi_{\mathrm{infer}}(y_{i,t}\mid h_{i,t};\theta_{\mathrm{old}})$ be the
probabilities that the two engines assign to the sampled token. The update ratio
is $\rho_{i,t}=\pi_{\mathrm{train}}(y_{i,t}\mid h_{i,t};\theta)/p_{i,t}$, and the
mismatch ratio is $k_{i,t}=p_{i,t}/q_{i,t}$. The mismatch ratio $k_{i,t}$ corrects
only the next-token distribution given the observed prefix $h_{i,t}$; it does not
reweight the distribution of the prefix itself. The update ratio $\rho_{i,t}$ is
handled by the standard PPO clipping, while the mismatch ratio $k_{i,t}$ carries
the discrepancy between the two engines and is the focus of the rest of this
paper.

\subsection{Correcting the mismatch}
\label{sec:prelim_correction}

The most direct way to correct the training--inference mismatch is to use the
exact ratio $k_{i,t}$ in \Cref{eq:objective}. This correction is unbiased, but on
the few tokens where the two engines disagree sharply, the ratio becomes very
large and the gradient estimate has uncontrolled variance. Existing methods
therefore truncate or mask $k_{i,t}$ and accept some bias in exchange for lower
variance.

At the token level, TIS replaces $k_{i,t}$ with $\min(k_{i,t}, C)$ for a fixed cap
$C$~\citep{yao2025offpolicy}. IcePop instead uses
$k_{i,t}\,\mathbf{1}\{\alpha\le k_{i,t}\le\beta\}$, which masks any token whose
ratio falls outside a fixed interval~\citep{zhao2025icepop,ling2025everystep}.
KPop masks a token when the binary KL divergence between the probabilities of
the two engines exceeds a fixed threshold~\citep{guo2026kpop,ling2026lingring}.
At the sequence level, Seq-TIS and Seq-MIS apply the same cap or mask to the
product of the ratios over a response~\citep{liu2025rlcollapse}, and GSPO
optimizes a sequence-level objective instead~\citep{zheng2025gspo}. A different
line of work leaves the reweighting unchanged and reduces the discrepancy
itself: FP16 training raises numerical precision~\citep{qi2025fp16}, and routing
replay makes the two engines select the same experts~\citep{ma2025r3}. All of
these approaches are effective to some extent. However, the thresholds of the
reweighting methods either act on $k_{i,t}$ directly and treat every token in the
same way, or are not derived from the statistics of the mismatch itself. In
Section~\ref{sec:cis}, we start from the logit perturbation between the two
engines, characterize the structure of the mismatch, and build our correction on
this characterization.

\section{Calibrated Importance Sampling}
\label{sec:cis}
\subsection{From logit perturbation to log-odds displacement}
\label{sec:logit_model}

We measure, on large-scale rollouts, the probabilities that the
two engines assign to the same sampled token (experimental details in
\Cref{app:mm_protocol}). On mixture-of-experts models, the top-$k$
selection of the gating network is a discrete decision, so a small numerical
difference can make the two engines activate different experts at some layer,
and this difference propagates through all subsequent layers. These differences
accumulate layer by layer and ultimately appear as different logits output by
the two engines under the same parameters and the same prefix. Write
$\mathbf z=(z_1,\dots,z_V)\in\mathbb{R}^{V}$ for the logit vector of the
training engine at the prefix $h_t$, and $\mathbf z+\boldsymbol\delta$ for that
of the inference engine, where
$\boldsymbol\delta=(\delta_1,\dots,\delta_V)\in\mathbb{R}^{V}$ is the per-logit
perturbation between the two engines. Then
$p_t=\pi_{\mathrm{train}}(y_t\mid h_t;\theta_{\mathrm{old}})
=\operatorname{softmax}(\mathbf z)_{y_t}$ and
$q_t=\pi_{\mathrm{infer}}(y_t\mid h_t;\theta_{\mathrm{old}})
=\operatorname{softmax}(\mathbf z+\boldsymbol\delta)_{y_t}$. Substituting the
two softmaxes into the mismatch ratio $k_t:=p_t/q_t$ yields the
following identity.

\begin{wrapfigure}[20]{r}{0.48\textwidth}
\vspace{-1.0\baselineskip}
\centering
\includegraphics[width=0.48\textwidth]{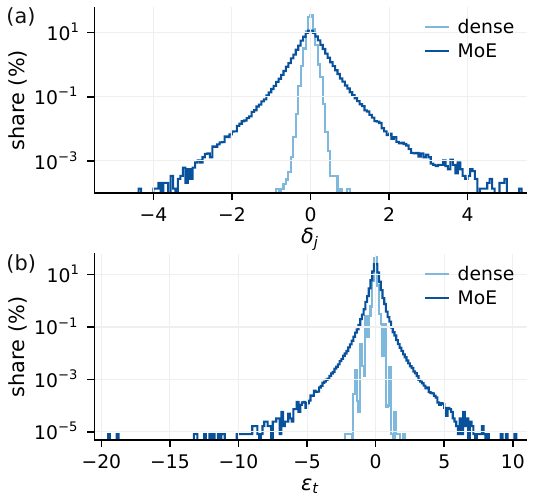}
\vspace{-1.25\baselineskip}
\caption{(a) Per-logit perturbation $\delta_j$ and (b) log-odds displacement
$\varepsilon_t$, MoE vs.\ dense.}
\label{fig:mismatch}
\vspace{-2.0\baselineskip}
\end{wrapfigure}

\begin{lemma}[Ratio--displacement identity]
\label{lem:ratio_displacement}
Let $w_j:=e^{z_j}\big/\sum_{j'\ne y_t}e^{z_{j'}}$ for $j\ne y_t$. Then the
mismatch ratio satisfies
\begin{equation}
k_t
=\frac{\operatorname{softmax}(\mathbf z)_{y_t}}
      {\operatorname{softmax}(\mathbf z+\boldsymbol\delta)_{y_t}}
=p_t+(1-p_t)\exp(\varepsilon_t),
\qquad
k_t-1=(1-p_t)\big(\exp(\varepsilon_t)-1\big),
\label{eq:k_geometry}
\end{equation}
where $\varepsilon_t:=-\delta_{y_t}+\log\sum_{j\ne y_t} w_j \exp(\delta_j)$.
\end{lemma}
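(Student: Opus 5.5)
The plan is a direct algebraic computation. No earlier result is needed beyond the definitions of $p_t$, $q_t$ and $k_t$. Abbreviate $y:=y_t$ and set $S:=\sum_{j\ne y}e^{z_j}$, so that $p_t=e^{z_y}/(e^{z_y}+S)$ and $1-p_t=S/(e^{z_y}+S)$. Every quantity in the statement can be written in terms of $e^{z_y}$, $S$, and the perturbed sum $S_\delta:=\sum_{j\ne y}e^{z_j+\delta_j}$.

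First I rewrite the perturbed sum using the weights $w_j$. By definition $e^{z_j}=S\,w_j$ for $j\ne y$, so
\[
S_\delta=S\sum_{j\ne y}w_j e^{\delta_j}.
\]
Combining this with the definition of $\varepsilon_t$,
\[
e^{\varepsilon_t}=e^{-\delta_y}\sum_{j\ne y}w_je^{\delta_j}=\frac{S_\delta}{S\,e^{\delta_y}}.
\]

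Next I compute the ratio directly. We have
\[
k_t=\frac{e^{z_y}}{e^{z_y}+S}\cdot\frac{e^{z_y+\delta_y}+S_\delta}{e^{z_y+\delta_y}}
=\frac{e^{z_y}+e^{-\delta_y}S_\delta}{e^{z_y}+S}.
\]
Substituting $e^{-\delta_y}S_\delta=S\,e^{\varepsilon_t}$ gives
\[
k_t=\frac{e^{z_y}}{e^{z_y}+S}+\frac{S}{e^{z_y}+S}\,e^{\varepsilon_t}=p_t+(1-p_t)e^{\varepsilon_t},
\]
which is the first identity. Subtracting $1=p_t+(1-p_t)$ from both sides yields the second form, $k_t-1=(1-p_t)(e^{\varepsilon_t}-1)$.

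There is no real obstacle here. The only care needed is to keep the normalization of $w_j$ straight: it uses the unperturbed training logits, excluding the sampled token. I would also note the implicit assumption $V\ge2$, so that $S>0$ and $w_j$ is well defined. Finally, I would remark that the identity holds for arbitrary $\boldsymbol\delta$, with no small-perturbation approximation, which is what the paper's characterization relies on.
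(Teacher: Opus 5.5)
Your proposal is correct and follows essentially the same route as the paper's proof: both substitute the two softmaxes into $k_t=p_t/q_t$, factor the perturbed sum $\sum_{j\ne y}e^{z_j+\delta_j}$ through the weights $w_j$, and recognize $1-p_t=S/(e^{z_y}+S)$. Your abbreviations $S$ and $S_\delta$ just make the paper's chain of equalities more compact, and your note that $V\ge2$ is needed for $w_j$ to be well defined is a harmless added precision.
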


The proof is given in \Cref{app:lemmas}. We call
$\varepsilon_t$ the log-odds displacement of the inference engine relative to
the training engine on the current token. From \Cref{eq:k_geometry},
$\varepsilon_t=\log\frac{p_t}{1-p_t}-\log\frac{q_t}{1-q_t}$, that is,
$\varepsilon_t$ is exactly the difference between the log-odds that the two
engines assign to the sampled token. In the log-odds coordinate, the discrepancy
between the two engines is therefore an additive displacement determined
entirely by the perturbation $\boldsymbol\delta$, and it can be computed
directly from the log-probabilities recorded by the two engines.
\Cref{fig:mismatch} shows the measured $\boldsymbol\delta$ and
$\varepsilon_t$. On the dense control, $\boldsymbol\delta$ is concentrated near
zero and bounded, consistent with bf16 rounding error, and $\varepsilon_t$ is
correspondingly concentrated tightly around zero. On the MoE, the body of
$\boldsymbol\delta$ is likewise concentrated near zero, but is distinguished
by a pronounced heavy tail. We analyze the source of this tail in
\Cref{app:mm_routing,app:mm_route_control}: rounding error accounts only for
the body, while routing disagreement between the two engines is a major
amplifier of the tail, and the tail of $\boldsymbol\delta$ becomes heavier as
the two engines select different experts at more layers. Since
$\varepsilon_t:=-\delta_{y_t}+\log\sum_{j\ne y_t} w_j \exp(\delta_j)$, the heavy tail of $\boldsymbol\delta$
carries over to $\varepsilon_t$, so the log-odds displacement on the MoE is
heavy-tailed as well. Given $p_t$, by \Cref{eq:k_geometry}, the ratio
is a deterministic, increasing function of $\varepsilon_t$: the mismatch enters
the importance weight through the displacement factor
$e^{\varepsilon_t}$, rescaled by $1-p_t$. The same displacement therefore moves
the ratio far from one on low-confidence tokens, whereas on high-confidence
tokens the factor $1-p_t$ compresses it and the ratio barely moves.

\subsection{Why exact correction is unstable}
\label{sec:why_correction}

A gradient update uses $n$ tokens, indexed by $m$. Write the payload of the
$m$-th token as
$\ell'_m~=~\nabla_\theta\min\big(\rho_m\hat A_m,\operatorname{clip}(\rho_m)\hat A_m\big)$;
the mismatch ratio $k_m$ is the importance sampling factor used to scale this gradient.
Since $k_m$ does not depend on $\theta$, the gradient of the token-level
surrogate objective and its estimator are
\begin{equation*}
G_0:=\nabla_\theta\mathcal J=\E[k\,\ell'],
\qquad
\widehat G:=\frac{1}{n}\sum_{m=1}^{n}k_m\,\ell'_m ,
\end{equation*}
where the expectation is over the prompt, rollout, and numerical
discrepancy between the two engines for each token. $G_0$ corrects the
conditional distribution of each token given its prefix, not the distribution
of the prefix itself. We measure the quality of the estimate by the
mean-squared error $\E\big\|\widehat G-G_0\big\|_2^2$.

\begin{wrapfigure}{r}{0.5\textwidth}
\vspace{-1.0\baselineskip}
\centering
\includegraphics[width=0.42\textwidth]{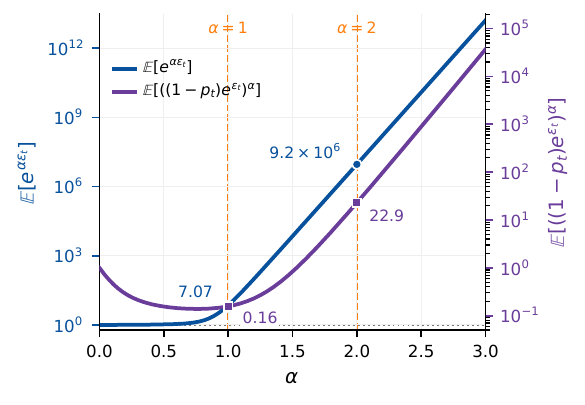}
\vspace{-1.4\baselineskip}
\caption{$\E[e^{\alpha\varepsilon_t}]$ (left axis) and
$\E[((1-p_t)e^{\varepsilon_t})^{\alpha}]$ (right axis) on the measured
displacements of the trained MoE.}
\label{fig:mgf}
\vspace{-0.8\baselineskip}
\end{wrapfigure}

\begin{theorem}[Informal]
\label{thm:exact_risk}
Assume $\|\ell'_t\|_2\le b$ and that the contributions of different tokens are
independent (for correlated contributions, replace $n$ by $n_{\mathrm{eff}}$;
see \Cref{app:proofs}). Then
\begin{equation}
\E\big\|\widehat G-G_0\big\|_2^2
\;\le\;
\frac{b^2}{n}\,\E\Big[p_t^2+2p_t(1-p_t)\exp(\varepsilon_t)+(1-p_t)^2\,e^{2\varepsilon_t}\Big].
\label{eq:exact_risk}
\end{equation}
\end{theorem}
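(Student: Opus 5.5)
The argument is a standard variance bound for an i.i.d. sample mean, followed by the ratio--displacement identity of \Cref{lem:ratio_displacement}, which rewrites the second moment of $k_t$ in terms of $p_t$ and $\varepsilon_t$.

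\textbf{Step 1 (bias--variance decomposition).} Let $X_m := k_m\ell'_m$, so that $\widehat G=\frac1n\sum_m X_m$. By the definition of $G_0$, each term satisfies $\E[X_m]=G_0$, hence $\widehat G$ is unbiased. Under the independence assumption the cross terms $\E\langle X_m-G_0,X_{m'}-G_0\rangle$ vanish for $m\neq m'$. This gives
\begin{equation*}
\E\big\|\widehat G-G_0\big\|_2^2=\frac1n\,\E\|X-G_0\|_2^2=\frac1n\big(\E\|X\|_2^2-\|G_0\|_2^2\big)\le\frac1n\,\E\|X\|_2^2 .
\end{equation*}
For correlated tokens, the same computation keeps the cross terms. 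One then defines $n_{\mathrm{eff}}$ by $\frac{1}{n^2}\sum_{m,m'}\mathrm{Cov}(X_m,X_{m'}) = \frac{1}{n_{\mathrm{eff}}}\mathrm{Var}(X)$, which produces the stated replacement of $n$ by $n_{\mathrm{eff}}$.

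\textbf{Step 2 (pull out the payload).} Since $k_t\ge 0$ and $\|\ell'_t\|_2\le b$, we have $\E\|X\|_2^2=\E[k_t^2\|\ell'_t\|_2^2]\le b^2\,\E[k_t^2]$. This holds pointwise and needs no independence between $k$ and $\ell'$.

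\textbf{Step 3 (apply the identity).} By \Cref{lem:ratio_displacement}, $k_t=p_t+(1-p_t)e^{\varepsilon_t}$. Squaring gives
\[
k_t^2=p_t^2+2p_t(1-p_t)e^{\varepsilon_t}+(1-p_t)^2e^{2\varepsilon_t}.
\]
Taking expectations over the prompt, the rollout, and the numerical discrepancy gives exactly the right-hand side of \Cref{eq:exact_risk}.

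The main obstacle is not the algebra but making the expectation well defined and the independence step rigorous. Tokens within a sequence share prefixes, so the i.i.d. structure is only an idealization. The honest route is to state the bound with $n_{\mathrm{eff}}$, as above, and to note that the inequality remains trivially true (both sides infinite) when $\E[e^{2\varepsilon_t}]=\infty$. That heavy-tailed case is precisely the explosive second-moment term the paper wants to expose.
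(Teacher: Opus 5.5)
Your proposal matches the paper's proof: unbiasedness plus independence reduces the risk to $\frac1n\E\|k_t\ell'_t-G_0\|_2^2\le\frac{b^2}{n}\E[k_t^2]$, the ratio--displacement identity then expands $k_t^2$, and correlated tokens are handled through $n_{\mathrm{eff}}$ as in the paper's remark. One small correction: split the degenerate case on $\E[k_t^2]=\infty$, which is exactly the right-hand side being infinite, as the paper does, rather than on $\E[e^{2\varepsilon_t}]=\infty$. The factor $(1-p_t)^2$ can keep the right-hand side finite when $\E[e^{2\varepsilon_t}]$ diverges, and the left-hand side need not be infinite in either case.
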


\Cref{fig:mgf} shows both moments on the displacements of the MoE after RL training (\Cref{app:mm_moment}). The raw moment $\E[e^{\alpha\varepsilon_t}]$ is only
$7.07$ at $\alpha=1$ but reaches $9.2\times10^{6}$ at $\alpha=2$. The weighted
moment $\E[((1-p_t)e^{\varepsilon_t})^{\alpha}]$, which is the quantity that
enters \Cref{eq:exact_risk}, is compressed by the factor $(1-p_t)^{\alpha}$ but
follows the same pattern: it rises from $0.16$ at $\alpha=1$ to $22.9$ at
$\alpha=2$. Since $p_t^2\le1$ and the cross term is at most $2\times0.16$, the
bound (\Cref{eq:exact_risk}) is dominated by the second-moment term and is more
than twenty times the corresponding bound without correction, for which the
expectation equals one. Exact correction is therefore unbiased, but its variance
is driven by the upper tail of $\varepsilon_t$.

This changes the problem from whether the mismatch should be corrected to
how the exact correction should be truncated. Some bias must be accepted in
exchange for controlling its second moment. The next section constructs such
a truncation directly in the coordinate from which the mismatch arises.
\subsection{Correction in the displacement coordinate}
\label{sec:optimal_operator}

\begin{wrapfigure}[20]{r}{0.42\textwidth}
\centering
\includegraphics[width=0.42\textwidth]{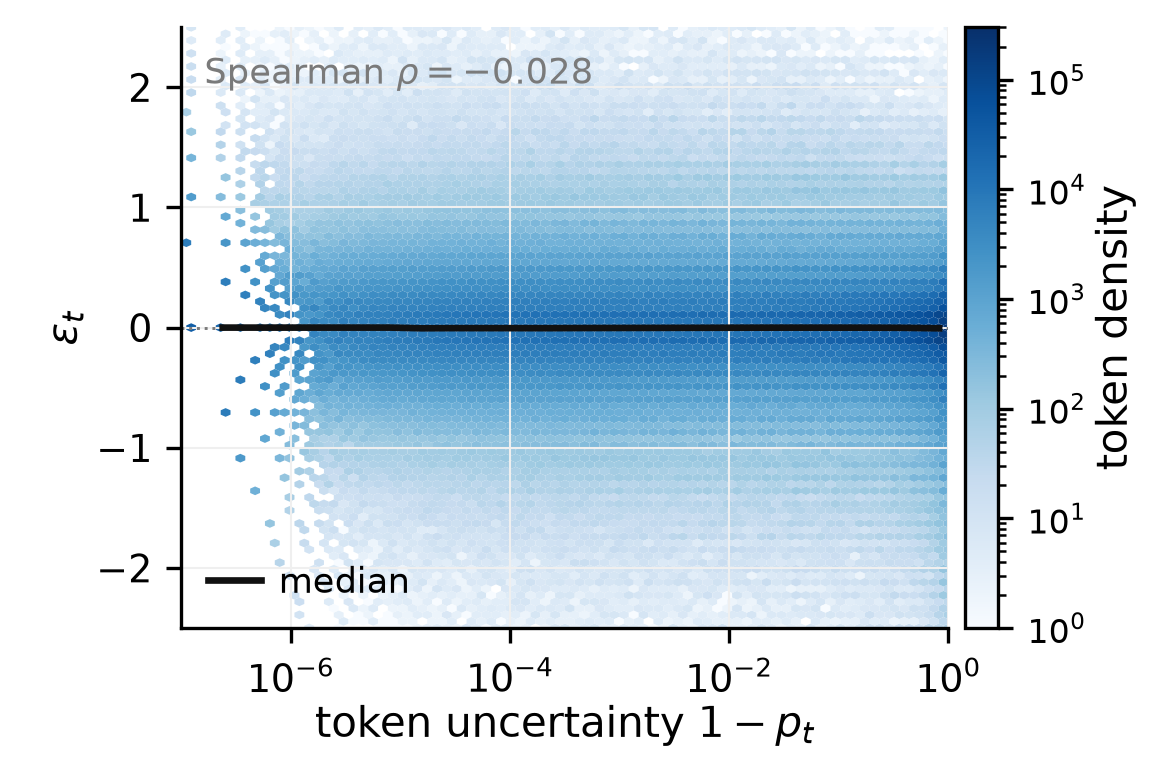}
\vspace{-1.4\baselineskip}
\caption{$\varepsilon_t$ against token uncertainty $1-p_t$ on the base MoE; the
black line is the median.}
\label{fig:eps_vs_p}
\end{wrapfigure}

Given $p_t$, the probability of generating a token, the mismatch enters the ratio only through
$e^{\varepsilon_t}$, by \Cref{eq:k_geometry}:
\[
k_t=p_t+(1-p_t)\exp(\varepsilon_t).
\]
The variance problem is intrinsically one-sided. If
$e^{\varepsilon_t}\le1$, then
\[
p_t<k_t\le1,
\]
so the importance weight on this side satisfies $k_t^2\le1$.
In contrast, when $\exp(\varepsilon_t)>1$, the ratio exceeds one and can grow
without bound as the positive displacement increases. Thus the second-moment
explosion identified in \Cref{sec:why_correction} can only arise from the
upper tail. Modifying the lower side does not address this source of
unbounded variance and instead introduces additional deviation from the exact
correction. We therefore truncate only large positive displacements. It remains to choose where to truncate the upper tail. 

The displacement itself is only weakly related to confidence.
\Cref{fig:eps_vs_p} plots $\varepsilon_t$ against $1-p_t$ on the base MoE (\Cref{app:mm_confidence}):
across six orders of magnitude of uncertainty, the median stays at zero and the spread changes
by less than a factor of two with a Spearman rank correlation of only $-0.028$.
A displacement of a given amount is therefore approximately equally anomalous
across confidence levels. This suggests placing a single upper threshold
directly in the displacement coordinate with
$\exp(\varepsilon_t)\le1+\lambda$ where threshold $\lambda\ge0$.

Truncating $e^{\varepsilon_t}$ at $1+\lambda$ and mapping back through \Cref{eq:k_geometry} gives
\begin{equation}
f_{\lambda}(k_t,p_t)
\;=\;
p_t+(1-p_t)\min\big\{e^{\varepsilon_t},\,1+\lambda\big\}
\;=\;
\min\big\{k_t,\;1+\lambda(1-p_t)\big\}.
\label{eq:cis_operator}
\end{equation}
{\bf \Cref{eq:cis_operator} introduces \emph{calibrated importance sampling} (CIS), our proposed correction for training--inference mismatch.}
The factor $1-p_t$ in the cap is not a design choice: it is the factor by which \Cref{eq:k_geometry} maps displacements to ratios. The two forms in \Cref{eq:cis_operator} are identical, token by token; we implement the
right-hand form as it only requires the difference of the two
log-probabilities, which remains finite when a probability rounds to one in bf16,
whereas $\operatorname{logit}p_t$ and $\operatorname{logit}q_t$ do not.

For comparison, a fixed ratio cap $k_t\le C$ corresponds through \Cref{eq:k_geometry} to
\begin{equation}
\exp(\varepsilon_t)
\le
1+\frac{C-1}{1-p_t},
\label{eq:fixed_cap_eps}
\end{equation}
whose displacement threshold becomes increasingly permissive as confidence
grows. CIS instead applies the same upper displacement threshold at every
confidence level, and the confidence-dependent cap in \Cref{eq:cis_operator} follows automatically from the mapping back to
ratio space.

\begin{theorem}[Informal]
\label{thm:cis_risk}
Assume $\|\ell'_t\|_2\le b$ and that the contributions of different tokens are
independent. Then for any finite $\lambda\ge0$,
\begin{equation}
\begin{aligned}
\E\big\|\widehat G_{f_{\lambda}}-G_0\big\|_2^2
\;\le\;& b^2\Big(\E\big[(1-p_t)(e^{\varepsilon_t}-1-\lambda)_+\big]\Big)^2+\frac{b^2}{n}\,\E\Big[\big(p_t+(1-p_t)\min\{e^{\varepsilon_t},1+\lambda\}\big)^2\Big].
\end{aligned}
\label{eq:cis_risk}
\end{equation}
\end{theorem}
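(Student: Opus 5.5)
The argument is a bias--variance decomposition of the estimator $\widehat G_{f_\lambda}:=\frac1n\sum_{m=1}^n f_\lambda(k_m,p_m)\,\ell'_m$ around its own mean $G_\lambda:=\E[f_\lambda(k,p)\,\ell']$. Writing $\widehat G_{f_\lambda}-G_0=(\widehat G_{f_\lambda}-G_\lambda)+(G_\lambda-G_0)$, the cross term has zero expectation because $G_\lambda-G_0$ is deterministic and $\E[\widehat G_{f_\lambda}]=G_\lambda$. Hence
\[
\E\|\widehat G_{f_\lambda}-G_0\|_2^2=\|G_\lambda-G_0\|_2^2+\E\|\widehat G_{f_\lambda}-G_\lambda\|_2^2 .
\]
The first term yields the squared-bias piece of \Cref{eq:cis_risk}, and the second yields the $1/n$ variance piece.

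\textbf{Bias.} By \Cref{lem:ratio_displacement} and \Cref{eq:cis_operator}, token by token,
\[
k_t-f_\lambda(k_t,p_t)=(1-p_t)\bigl(e^{\varepsilon_t}-\min\{e^{\varepsilon_t},1+\lambda\}\bigr)=(1-p_t)(e^{\varepsilon_t}-1-\lambda)_+\ge 0 .
\]
Therefore $G_0-G_\lambda=\E[(1-p_t)(e^{\varepsilon_t}-1-\lambda)_+\,\ell']$. Jensen's inequality for the norm, $\|\E X\|\le\E\|X\|$, together with $\|\ell'\|_2\le b$ and nonnegativity of the weight, gives $\|G_\lambda-G_0\|_2\le b\,\E[(1-p_t)(e^{\varepsilon_t}-1-\lambda)_+]$. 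Squaring this gives the first term.

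\textbf{Variance.} Under the independence assumption the summands $f_\lambda(k_m,p_m)\ell'_m-G_\lambda$ are independent and mean zero, so the cross terms vanish. This gives
\[
\E\|\widehat G_{f_\lambda}-G_\lambda\|_2^2=\frac1n\E\|f_\lambda\ell'-G_\lambda\|_2^2\le\frac1n\E\bigl[f_\lambda^2\|\ell'\|_2^2\bigr]\le\frac{b^2}{n}\E\bigl[f_\lambda(k_t,p_t)^2\bigr],
\]
using that the variance is at most the second moment. Substituting the left-hand form of \Cref{eq:cis_operator} for $f_\lambda$ gives exactly the second term of \Cref{eq:cis_risk}. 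Expanding the square shows this term is at most $(1+\lambda)^2$, which is the advertised bounded replacement for the $e^{2\varepsilon_t}$ term in \Cref{thm:exact_risk}. This is the same computation as \Cref{thm:exact_risk} with $\lambda=\infty$, where the bias vanishes.

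\textbf{Expected obstacle.} The main difficulty is making ``independent contributions'' precise. Tokens within a response share a prefix and the same $\theta$, so the zero-cross-term step really needs either genuine independence across the $n$ summands or an $n_{\mathrm{eff}}$ substitute. I would state the formal version assuming the summands are i.i.d. draws from the joint distribution of $(p,\varepsilon,\ell')$. I would then note that for correlated tokens the same bound holds with $n$ replaced by $n_{\mathrm{eff}}$, defined so that the covariance sum equals $\frac{1}{n_{\mathrm{eff}}}$ times the per-token second moment. A secondary check is that $f_\lambda$ does not depend on $\theta$, so it passes through the gradient as a constant weight, exactly as $k_m$ does.
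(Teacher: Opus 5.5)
Your proposal is correct and follows essentially the same route as the paper. The paper first applies its bias--variance lemma: it inserts $G_f$, kills the cross term, uses independence, and drops $-\|G_f\|_2^2/n$. It then bounds the bias by $b\,\E[k_t-f_\lambda(k_t,p_t)]$ via $f_\lambda\le k_t$ and the truncated-excess identity $k-f_\lambda=(1-p)(e^{\varepsilon}-1-\lambda)_+$, and rewrites the variance term through $f_\lambda=p+(1-p)\min\{e^{\varepsilon},1+\lambda\}$. The only detail the paper states that you leave implicit is the integrability check $\E[f_\lambda^2\|\ell'_t\|_2^2]\le b^2(1+\lambda)^2<\infty$, which follows from the boundedness of $f_\lambda$ that you already note.
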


Compared with \Cref{eq:exact_risk}, the unbounded second moment
$\E[e^{2\varepsilon_t}]$ is replaced by
$\E\big[\min(e^{\varepsilon_t},1+\lambda)^2\big]\le(1+\lambda)^2$,
at the price of a bias from the truncated excess; $\lambda$ controls this
trade-off, and $\lambda\to\infty$ recovers exact correction. Any cap with a
finite maximum admits a similar bound, so the advantage of CIS over a
fixed ratio cap lies not in this bound but in where the bias falls:
CIS spreads the truncation bias more evenly across confidence levels and
attains a lower overall bias (\Cref{sec:experiments-ablation}).

\begin{wrapfigure}{r}{0.5\textwidth}
\vspace{-\baselineskip}
\begin{minipage}{0.5\textwidth}
\begin{algorithm}[H]
\caption{Calibrated importance sampling (CIS)}
\label{alg:cis}
\begin{algorithmic}[1]
\Require $\ell^{\mathrm{train}}_m$, $\ell^{\mathrm{infer}}_m$, $\ell'_m$ for
the $n$ tokens of the batch; threshold $\lambda$; floor $\kappa$
\For{$m=1,\dots,n$}
  \State $p_m\gets\exp(\ell^{\mathrm{train}}_m)$,
  \Statex \hspace{\algorithmicindent}$k_m\gets
    \exp(\ell^{\mathrm{train}}_m-\ell^{\mathrm{infer}}_m)$
  \State $\varphi_m\gets\max(1-p_m,\,\kappa)$
  \State $f_m\gets\min\big\{k_m,\;1+\lambda\varphi_m\big\}$
\EndFor
\State \Return $\widehat G_{\mathrm{CIS}}
=\frac{1}{n}\sum_{m=1}^{n}\operatorname{stopgrad}(f_m)\,\ell'_m$
\end{algorithmic}
\end{algorithm}
\end{minipage}
\end{wrapfigure}

\paragraph{Algorithm.}
We treat $\lambda$ as a hyperparameter; \Cref{sec:experiments-tuning} shows
that every positive threshold outperforms the uncorrected run. Applying
\Cref{eq:cis_operator} in practice raises one issue: as $1-p_t$ becomes very
small, the cap $1+\lambda(1-p_t)$ approaches one and becomes narrower than the
storage resolution of the log-probabilities. The truncated tokens then have
$|\log k_t|$ at the level of that resolution, so the truncation acts on
rounding error rather than on the mismatch; without a remedy, the average
accuracy drops from $34.78$ to $29.02$ (\Cref{sec:experiments-ablation}). We therefore floor $1-p_t$ at the storage
resolution $\kappa=5\times10^{-3}$, replacing it by
$\varphi_t=\max(1-p_t,\kappa)$, which gives \Cref{alg:cis}. For tokens with
$1-p_t\ge\kappa$, \Cref{alg:cis} applies exactly the displacement threshold
$\exp(\varepsilon_t)\le1+\lambda$; below the floor, the threshold becomes
$e^{\varepsilon_t}\le1+\lambda\kappa/(1-p_t)$, which is looser. The correction adds
one elementwise pass and no forward or backward computation. The
inference-side log-probability must be the raw value recorded at the sampling
step, and the weight must be detached.

\section{Experiments}
\label{sec:experiments}

We organize the experiments around three questions.
\textbf{RQ1}: Does CIS improve held-out performance over existing
training--inference mismatch corrections?
\textbf{RQ2}: Why does CIS work, and which design choices matter?
\textbf{RQ3}: How sensitive is CIS to its hyperparameters?
Section~\ref{sec:experiments-setup} describes the shared setup.
Implementation details, training dynamics, additional diagnostics, and
out-of-domain evaluations are provided in \Cref{app:experimental,app:discussion}.

\subsection{Setup}
\label{sec:experiments-setup}

We evaluate RLVR on three MoE models:
Qwen1.5-MoE-A2.7B-Chat~\citep{qwen2024qwen15moe},
DeepSeek-V2-Lite~\citep{deepseekai2024deepseekv2}, and
Qwen3-30B-A3B~\citep{yang2025qwen3}.
All models are trained on GSM8K~\citep{cobbe2021gsm8k} and evaluated with
greedy pass@1 on five mathematical reasoning benchmarks:
GSM8K~\citep{cobbe2021gsm8k},
MATH500~\citep{hendrycks2021math,lightman2024verify},
SVAMP~\citep{patel2021svamp},
Minerva-Math~\citep{lewkowycz2022minerva},
and OlympiadBench~\citep{he2024olympiadbench}.
All methods share the same training recipe and code
path and differ only in how they handle the training--inference mismatch.
Unless stated otherwise, we report mean and standard deviation over three
training seeds.
We group the baselines by where they act:
\emph{token-level} (Exact Ratio~\citep{yao2025offpolicy,liu2025rlcollapse},
TIS~\citep{yao2025offpolicy},
IcePop~\citep{zhao2025icepop,ling2025everystep}, and
KPop$^\dagger$~\citep{guo2026kpop,ling2026lingring}),
\emph{sequence-level} (Seq-TIS and Seq-MIS~\citep{liu2025rlcollapse},
and GSPO~\citep{zheng2025gspo}), and
\emph{numerical} (FP16~\citep{qi2025fp16}).
The baseline ``No Correction'' provides the uncorrected reference, while CIS
belongs to the token-level class.
Full implementation details, hyperparameters, and evaluation protocols are
given in Appendix~\ref{app:experimental}.

\subsection{RQ1: Does CIS improve held-out performance?}
\label{sec:experiments-results}

\begin{table}[t]
\centering
\caption{Held-out accuracy (\%) on five math benchmarks after RL on GSM8K.
Mean $\pm$ standard deviation over three seeds; bold marks the best mean
per column within each model.}
\label{tab:main}
\scriptsize\setlength{\tabcolsep}{2pt}
\renewcommand{\arraystretch}{0.88}
\setlength{\aboverulesep}{0.7pt}\setlength{\belowrulesep}{0.7pt}
\begin{tabularx}{\linewidth}{cl|*{5}{>{\centering\arraybackslash}X}|>{\centering\arraybackslash}X}
\toprule
Model & Method & GSM8K & MATH500 & SVAMP & Minerva & Olympiad & Avg \\
\midrule

\multirow{11}{*}{\rotatebox[origin=c]{90}{\scriptsize Qwen1.5-MoE-A2.7B}}
& Base model (no RL) & $55.50$ & $21.20$ & $59.00$ & $4.04$ & $4.90$ & $28.93$ \\
& No correction & $61.21_{\pm1.17}$ & $21.33_{\pm1.40}$ & $64.33_{\pm1.73}$ & $3.68_{\pm1.27}$ & $4.40_{\pm0.45}$ & $30.99_{\pm0.22}$ \\
\cmidrule(lr){2-8}
& Exact ratio & $63.18_{\pm2.97}$ & $22.00_{\pm1.22}$ & $67.11_{\pm3.75}$ & $5.51_{\pm0.64}$ & $4.60_{\pm0.15}$ & $32.48_{\pm1.21}$ \\
& KPop$^\dagger$ & $64.80_{\pm1.29}$ & $23.67_{\pm1.29}$ & $68.67_{\pm3.71}$ & $5.15_{\pm1.47}$ & $5.24_{\pm0.67}$ & $33.50_{\pm1.04}$ \\
& IcePop & $66.16_{\pm0.96}$ & $24.33_{\pm1.03}$ & $69.78_{\pm1.02}$ & $5.51_{\pm0.64}$ & $5.09_{\pm1.11}$ & $34.18_{\pm0.55}$ \\
& TIS & $61.03_{\pm5.11}$ & $20.73_{\pm3.92}$ & $66.78_{\pm5.09}$ & $4.54_{\pm0.76}$ & $3.91_{\pm0.31}$ & $31.40_{\pm2.47}$ \\
\cmidrule(lr){2-8}
& Seq-TIS & $65.93_{\pm0.19}$ & $22.93_{\pm0.61}$ & $63.56_{\pm1.39}$ & $3.68_{\pm1.60}$ & $5.29_{\pm0.89}$ & $32.28_{\pm0.86}$ \\
& Seq-MIS & $60.98_{\pm1.50}$ & $22.07_{\pm0.50}$ & $63.56_{\pm1.17}$ & $4.41_{\pm0.64}$ & $4.75_{\pm0.26}$ & $31.15_{\pm0.67}$ \\
& GSPO & $60.10_{\pm2.77}$ & $17.47_{\pm0.50}$ & $58.56_{\pm7.73}$ & $5.02_{\pm0.56}$ & $3.36_{\pm0.31}$ & $28.90_{\pm1.25}$ \\
& FP16 & $60.02_{\pm1.16}$ & $22.67_{\pm0.46}$ & $62.78_{\pm1.54}$ & $5.02_{\pm0.21}$ & $4.60_{\pm0.45}$ & $31.02_{\pm0.39}$ \\
\cmidrule(lr){2-8}
\rowcolor{blue!8}\cellcolor{white}
& CIS (ours) & $\mathbf{67.88}_{\pm1.64}$ & $\mathbf{24.53}_{\pm1.03}$ &
$\mathbf{70.22}_{\pm1.35}$ & $\mathbf{5.88}_{\pm0.64}$ &
$\mathbf{5.39}_{\pm0.62}$ & $\mathbf{34.78}_{\pm0.25}$ \\

\midrule
\multirow{11}{*}{\rotatebox[origin=c]{90}{\scriptsize DeepSeek-V2-Lite}}
& Base model (no RL) & $70.66$ & $25.40$ & $67.67$ & $6.25$ & $6.08$ & $35.21$ \\
& No correction & $71.06_{\pm0.83}$ & $23.47_{\pm0.64}$ & $70.89_{\pm2.91}$ & $7.23_{\pm1.49}$ & $6.48_{\pm0.56}$ & $35.83_{\pm1.08}$ \\
\cmidrule(lr){2-8}
& Exact ratio & $71.17_{\pm0.27}$ & $24.67_{\pm0.64}$ & $70.22_{\pm1.84}$ & $6.37_{\pm1.49}$ & $4.95_{\pm0.56}$ & $35.47_{\pm0.34}$ \\
& KPop$^\dagger$ & $71.14_{\pm0.59}$ & $26.33_{\pm0.64}$ & $70.89_{\pm2.55}$ & $6.62_{\pm1.10}$ & $6.13_{\pm0.82}$ & $36.22_{\pm0.74}$ \\
& IcePop & $72.48_{\pm0.69}$ & $25.60_{\pm0.92}$ & $71.44_{\pm1.02}$ & $7.72_{\pm1.60}$ & $6.43_{\pm0.76}$ & $36.73_{\pm0.73}$ \\
& TIS & $72.55_{\pm0.84}$ & $24.80_{\pm1.00}$ & $69.33_{\pm1.76}$ & $7.72_{\pm0.97}$ & $5.69_{\pm1.12}$ & $36.02_{\pm0.51}$ \\
\cmidrule(lr){2-8}
& Seq-TIS & $71.87_{\pm0.50}$ & $24.53_{\pm0.70}$ & $70.89_{\pm0.84}$ & $6.74_{\pm1.18}$ & $5.74_{\pm0.45}$ & $35.95_{\pm0.29}$ \\
& Seq-MIS & $70.61_{\pm0.46}$ & $25.93_{\pm1.01}$ & $70.78_{\pm2.59}$ & $6.86_{\pm0.56}$ & $6.33_{\pm0.45}$ & $36.10_{\pm0.71}$ \\
& GSPO & $72.50_{\pm0.86}$ & $25.93_{\pm0.70}$ & $71.33_{\pm1.00}$ & $6.37_{\pm1.66}$ & $6.18_{\pm0.52}$ & $36.47_{\pm0.43}$ \\
& FP16 & $71.80_{\pm0.60}$ & $24.47_{\pm0.70}$ & $69.89_{\pm2.17}$ & $7.97_{\pm1.66}$ & $6.23_{\pm0.68}$ & $36.07_{\pm0.61}$ \\
\cmidrule(lr){2-8}
\rowcolor{blue!8}\cellcolor{white}
& CIS (ours) & $\mathbf{72.58}_{\pm0.76}$ & $\mathbf{26.47}_{\pm0.99}$ &
$\mathbf{71.89}_{\pm2.04}$ & $\mathbf{8.21}_{\pm1.66}$ &
$\mathbf{6.68}_{\pm0.51}$ & $\mathbf{37.16}_{\pm0.73}$ \\

\midrule
\multirow{11}{*}{\rotatebox[origin=c]{90}{\scriptsize Qwen3-30B-A3B}}
& Base model (no RL) & $95.22$ & $74.00$ & $93.33$ & $31.25$ & $45.40$ & $67.84$ \\
& No correction & $94.79_{\pm0.32}$ & $72.40_{\pm0.72}$ & $94.56_{\pm1.17}$ & $33.33_{\pm0.93}$ & $46.24_{\pm0.52}$ & $68.26_{\pm0.27}$ \\
\cmidrule(lr){2-8}
& Exact ratio & $95.35_{\pm0.32}$ & $74.40_{\pm0.53}$ & $93.22_{\pm1.07}$ & $33.33_{\pm0.21}$ & $47.28_{\pm1.09}$ & $68.72_{\pm0.30}$ \\
& KPop$^\dagger$ & $95.40_{\pm0.23}$ & $74.80_{\pm0.53}$ & $94.56_{\pm0.69}$ & $33.21_{\pm0.93}$ & $47.33_{\pm0.39}$ & $69.06_{\pm0.29}$ \\
& IcePop & $95.43_{\pm0.49}$ & $74.93_{\pm0.64}$ & $94.89_{\pm1.17}$ & $32.11_{\pm1.29}$ & $47.58_{\pm0.70}$ & $68.99_{\pm0.51}$ \\
& TIS & $95.15_{\pm0.20}$ & $74.80_{\pm0.72}$ & $94.56_{\pm0.84}$ & $33.58_{\pm1.18}$ & $47.63_{\pm0.45}$ & $69.14_{\pm0.30}$ \\
\cmidrule(lr){2-8}
& Seq-TIS & $95.12_{\pm0.16}$ & $74.67_{\pm0.61}$ & $93.78_{\pm1.02}$ & $33.33_{\pm1.29}$ & $48.07_{\pm0.45}$ & $68.99_{\pm0.61}$ \\
& Seq-MIS & $95.15_{\pm0.35}$ & $73.93_{\pm0.42}$ & $93.89_{\pm1.02}$ & $32.72_{\pm1.10}$ & $47.73_{\pm0.45}$ & $68.68_{\pm0.15}$ \\
& GSPO & $95.55_{\pm0.24}$ & $74.47_{\pm0.50}$ & $94.44_{\pm1.17}$ & $33.21_{\pm0.56}$ & $48.47_{\pm0.73}$ & $69.23_{\pm0.16}$ \\
& FP16 & $94.79_{\pm0.27}$ & $74.13_{\pm0.42}$ & $93.67_{\pm1.45}$ & $33.21_{\pm1.66}$ & $46.83_{\pm0.91}$ & $68.53_{\pm0.31}$ \\
\cmidrule(lr){2-8}
\rowcolor{blue!8}\cellcolor{white}
& CIS (ours) & $\mathbf{95.60}_{\pm0.33}$ & $\mathbf{75.00}_{\pm0.92}$ &
$\mathbf{95.89}_{\pm1.50}$ & $\mathbf{34.31}_{\pm0.56}$ &
$\mathbf{48.62}_{\pm0.45}$ & $\mathbf{69.88}_{\pm0.55}$ \\
\bottomrule
\end{tabularx}
\end{table}

Table~\ref{tab:main} compares all methods under the same training recipe.
CIS achieves the highest five-benchmark average on all three MoE models.
On Qwen1.5-MoE-A2.7B, the average improves from $30.99$ without correction
to $34.78$ with CIS, compared with $32.48$, $33.50$, $34.18$, and $31.40$
for Exact Ratio, KPop$^\dagger$, IcePop, and TIS, respectively.
CIS also achieves $67.88$ on the training-domain GSM8K benchmark.
Its margins over IcePop on several transfer benchmarks are smaller than the
across-seed variation, however, so we do not interpret these individual
columns as statistically separated.

The same trend holds across model families.
CIS reaches an average of $37.16$ on DeepSeek-V2-Lite and $69.88$ on
Qwen3-30B-A3B, the highest mean in both blocks.
For Qwen3-30B-A3B, the base model already reaches $95.22\%$ on GSM8K,
leaving little headroom on the training task; the five-benchmark average
therefore provides the more informative comparison.

Methods that act at other levels show different limitations.
Sequence-level correction accumulates mismatch over the whole response,
while changing numerical precision reduces but does not eliminate the
residual engine discrepancy.
We also evaluate the Qwen1.5-MoE checkpoints on three knowledge and two
code benchmarks that the RL stage never uses (\Cref{tab:ood} in
\Cref{app:experimental-ood}); CIS has the highest pooled accuracy there, but
its margin over IcePop is within the seed spread. Training dynamics and
computational cost are reported in \Cref{app:experimental-dynamics,app:experimental-cost}.

\subsection{RQ2: Why does CIS work, and which design choices matter?}
\label{sec:experiments-ablation}

\begin{figure*}[t]
    \centering
    \includegraphics[width=\textwidth]{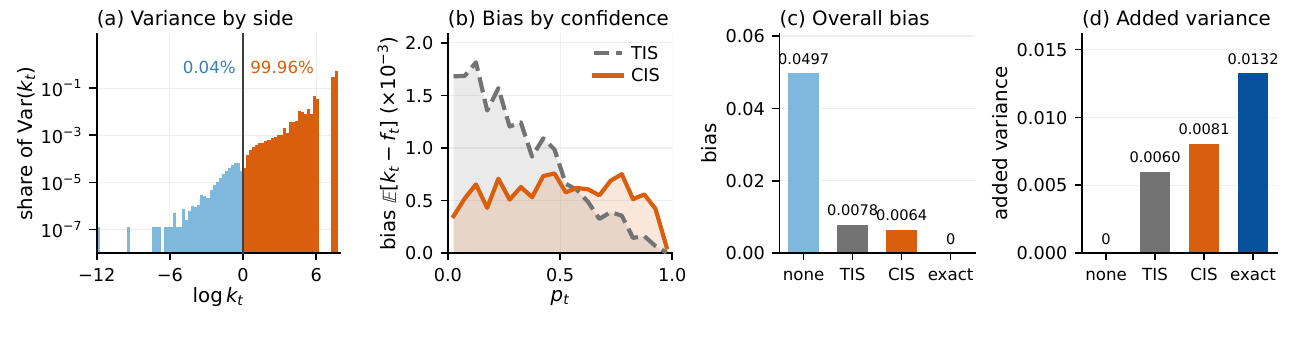}
    \caption{
    \textbf{(a)} Share of $\mathrm{Var}(k_t)$ on each side of $\log k_t=0$.
    \textbf{(b)} Truncation bias across confidence bins.
    \textbf{(c)} Overall bias relative to exact correction.
    \textbf{(d)} Added variance relative to no correction.
    }
    \label{fig:rq2_bv}
\end{figure*}

\begin{table}[htbp!]
\centering
\caption{Ablations of CIS on Qwen1.5-MoE-A2.7B. Held-out accuracy (\%) under the protocol of \Cref{tab:main}; mean $\pm$ standard deviation over three seeds.}
\label{tab:ablation}
\scriptsize\setlength{\tabcolsep}{2pt}
\renewcommand{\arraystretch}{0.95}
\setlength{\aboverulesep}{0.7pt}\setlength{\belowrulesep}{0.7pt}
\begin{tabularx}{\linewidth}{l|*{5}{>{\centering\arraybackslash}X}|>{\centering\arraybackslash}X}
\toprule
Variant & GSM8K & MATH500 & SVAMP & Minerva & Olympiad & Avg \\
\midrule
\rowcolor{blue!8}
CIS ($\lambda=2.3$, $\kappa=5\times10^{-3}$) & $\mathbf{67.88}_{\pm1.64}$ & $\mathbf{24.53}_{\pm1.03}$ & $\mathbf{70.22}_{\pm1.35}$ & $\mathbf{5.88}_{\pm0.64}$ & $\mathbf{5.39}_{\pm0.62}$ & $\mathbf{34.78}_{\pm0.25}$ \\
Fixed ratio cap (TIS, $C=2$) & $61.03_{\pm5.11}$ & $20.73_{\pm3.92}$ & $66.78_{\pm5.09}$ & $4.54_{\pm0.76}$ & $3.91_{\pm0.31}$ & $31.40_{\pm2.47}$ \\
Fixed ratio cap (TIS, $C=1.13$) & $14.99_{\pm17.04}$ & $5.73_{\pm5.06}$ & $20.33_{\pm20.53}$ & $1.47_{\pm1.91}$ & $1.88_{\pm1.86}$ & $8.88_{\pm9.24}$ \\
Two-sided band ($w_t\ge(1+p_t)/2$) & $5.31_{\pm0.33}$ & $4.40_{\pm0.40}$ & $15.67_{\pm1.86}$ & $1.47_{\pm0.64}$ & $3.21_{\pm0.45}$ & $6.01_{\pm0.50}$ \\
Raised floor ($\kappa=0.1$) & $64.42_{\pm1.67}$ & $22.60_{\pm4.06}$ & $64.44_{\pm1.95}$ & $4.90_{\pm0.85}$ & $4.50_{\pm0.37}$ & $32.17_{\pm1.28}$ \\
No floor ($\kappa=0$) & $55.62_{\pm2.69}$ & $20.87_{\pm1.10}$ & $59.78_{\pm2.83}$ & $5.88_{\pm0.37}$ & $2.97_{\pm0.15}$ & $29.02_{\pm1.43}$ \\
\bottomrule
\end{tabularx}
\end{table}

\noindent\textbf{Why truncate only the upper side?}
The variance is almost entirely carried by the upper tail:
on the sampled tokens of the trained MoE at step 86,
tokens with $k_t>1$ account for $99.96\%$ of $\mathrm{Var}(k_t)$,
whereas those with $k_t<1$ contribute only $0.04\%$
(\Cref{fig:rq2_bv}a).
The training ablation shows the same asymmetry:
additionally lifting small displacements to $e^{\varepsilon_t}\ge1/2$,
which gives the two-sided weight
$w_t=\max\{\min\{k_t,1+\lambda\varphi_t\},\,(1+p_t)/2\}$, reduces the
average accuracy from $34.78$ to $6.01$ (see \Cref{tab:ablation}); CIS applies
no lower threshold.
CIS therefore truncates only the upper side and leaves the bounded lower
side unchanged.

\begin{wrapfigure}{r}{0.42\textwidth}
\vspace{-1.2\baselineskip}
\centering
\includegraphics[width=0.42\textwidth]{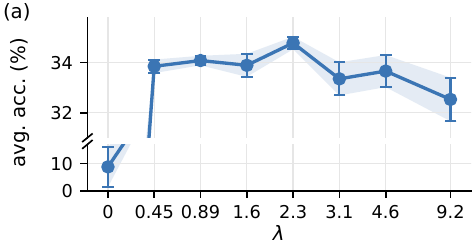}\\[2pt]
\includegraphics[width=0.42\textwidth]{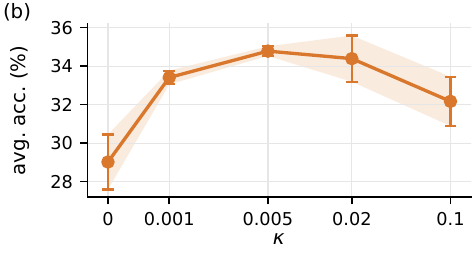}
\vspace{-1.4\baselineskip}
\caption{(a) Threshold $\lambda$ and (b) floor $\kappa$ on Qwen1.5-MoE-A2.7B.}
\label{fig:hyperparameter}
\vspace{-1.2\baselineskip}
\end{wrapfigure}

\noindent\textbf{Why truncate at the source of the mismatch?}
TIS caps the final importance ratio $k_t$, whereas CIS caps the displacement
factor $e^{\varepsilon_t}$, through which the mismatch enters the ratio, at the
common threshold $1+\lambda$ of
\Cref{eq:cis_operator} and then maps this bound back to ratio space.
Because $k_t$ mixes the displacement with token confidence, a fixed cap on
$k_t$ does not represent the same amount of mismatch across tokens.
In contrast, truncating $e^{\varepsilon_t}$ applies the same criterion to the
mismatch itself, while the induced ratio bound automatically adapts to
confidence.

This changes where truncation bias is spent.
\Cref{fig:rq2_bv}b bins the sampled tokens of the base MoE by $p_t$ and
reports $\E[k_t-f_t]$ in each bin.
TIS concentrates its bias on low-confidence tokens: below $p_t=0.3$, the bias
induced by CIS is less than half that of TIS, whereas CIS takes on more bias
at high confidence.
CIS truncates more high-confidence tokens by count, but each such truncation
removes little weight, because the factor $1-p_t$ compresses the ratio on
these tokens; more than half of the weight that CIS removes comes from tokens
with $p_t<0.9$. The two operators differ most in gradient bias at low
confidence (\Cref{fig:rq2_bv}b): for $p_t<0.565$, the cap of CIS is looser
than $C=2$, so CIS removes less weight there than TIS.
\Cref{app:experimental-masking} shows the tokens that each operator acts on
during training.

\Cref{fig:rq2_bv}c,d correspond to the two terms of the mean-squared error in
\Cref{thm:cis_risk}: the bias $\|\E[(f_t-k_t)\ell'_t]\|_2$ and the increase
in the variance of $f_t\ell'_t$ over $f_t\equiv1$.
We compute both with the gradient of $\log p_t$ with respect to the logits in
place of $\ell'_t$, taking expectations over the next-token distributions at
the 6,000 base-model positions of \Cref{app:mm_protocol}; exact correction is
unbiased for the conditional target $G_0$ and therefore has zero bias.
Although CIS truncates more probability mass overall, its overall bias is
lower ($0.0064$ versus $0.0078$ for TIS), while its added variance remains
well below that of Exact Ratio ($0.0081$ versus $0.0132$).
Thus, truncating at the mismatch source changes \emph{where} the clipping
occurs, not only how much is clipped, which yields a more favorable
bias--variance trade-off.

\noindent\textbf{Ablations.}
\Cref{tab:ablation} supports these design choices.
Replacing the confidence-dependent CIS boundary with the fixed TIS cap
reduces the average accuracy from $34.78$ to $31.40$, while the two-sided band
drops it to $6.01$, below the base model ($28.93$).
With $C=1.13$, which truncates a similar share of tokens as CIS ($4.3\%$ versus
$3.8\%$, \Cref{tab:train_stats}), the average falls to $8.88$: the same amount
of truncation harms training when a fixed cap places it on low-confidence tokens.
TIS with $C=1$, which is CIS with $\lambda=0$, collapses as well (\Cref{fig:hyperparameter}a).
The numerical floor is also necessary: removing it lowers the average
accuracy to $29.02$, and raising it to $\kappa=0.1$ lowers it to $32.17$.
Together, the results support the three components of CIS:
upper-side truncation, truncation at the mismatch source, and a small
numerical floor.

\subsection{RQ3: How sensitive is CIS to its hyperparameters?}
\label{sec:experiments-tuning}

\Cref{fig:hyperparameter} varies one hyperparameter of CIS at a time on
Qwen1.5-MoE-A2.7B, with the other fixed at its default; per-benchmark results
are in \Cref{app:experimental-sweeps}.
For the threshold, $\lambda=0$ truncates every ratio above one, which is TIS
with $C=1$, and the average accuracy collapses to $8.75$. Every positive
threshold stays above the uncorrected run ($30.99$), and the default
$\lambda=2.3$ gives the highest accuracy ($34.78$). The other positive
thresholds lie between $32.53$ and $34.08$, so once $\lambda>0$ the average
changes by at most $2.3$ points.
For the floor, accuracy peaks near the storage resolution of
the log-probabilities, where $\kappa=5\times10^{-3}$ and $\kappa=2\times10^{-2}$
reach $34.78$ and $34.39$. A smaller floor of $10^{-3}$ loses $1.4$ points, and
removing the floor lowers the average to $29.02$, below the uncorrected run,
which is consistent with the cap then truncating rounding error on
high-confidence tokens. Raising the floor to $0.1$ lowers the average to
$32.17$; the cap then stays constant on every token with $1-p_t<0.1$ and no
longer tightens with confidence.
\section{Related Work}
\label{sec:related}

RL frameworks separate rollout generation from gradient computation to
improve efficiency~\citep{sheng2025hybridflow,hu2025openrlhf,fu2025areal},
but numerical differences between the engines can yield different token
probabilities even with identical parameters~\citep{yao2025offpolicy}.
In mixture-of-experts models, small perturbations can change discrete
expert selections and amplify the
mismatch~\citep{ma2025r3,ling2025everystep}.
FP16 reduces numerical error~\citep{qi2025fp16}, routing replay aligns
expert selections~\citep{ma2025r3}, and adaptive learning-rate decay
stabilizes optimization~\citep{zhang2026beyondprecision}.
These interventions address different mechanisms: improved precision
does not enforce routing agreement, while replay requires routing traces.
RSPO instead addresses routing drift across policy
updates~\citep{zhang2026rspo}, which differs from engine mismatch at
identical parameters.
CIS complements these approaches by controlling the gradient impact of
residual mismatch without changing numerical precision or recording routes.

Importance sampling corrects distribution mismatch, while truncation
trades bias for reduced
variance~\citep{ionides2008truncated,espeholt2018impala}.
For engine mismatch, TIS caps token ratios~\citep{yao2025offpolicy},
and IcePop masks ratios outside a fixed
interval~\citep{zhao2025icepop,ling2025everystep}.
These thresholds do not account for how token confidence affects the
ratio, whereas KPop addresses this heterogeneity through binary-KL
masking~\citep{guo2026kpop,ling2026lingring}.
Sequence-level alternatives truncate response
weights~\citep{liu2025rlcollapse}, optimize sequence
ratios~\citep{zheng2025gspo}, or reject responses that violate a
trust-region criterion~\citep{li2025trm}.
CISPO also clips weights to preserve token contributions, but uses fixed
bounds on the current-to-old policy ratio~\citep{minimax2025m1}.
CIS instead derives its training--inference ratio cap from a common
threshold on log-odds displacement.
The resulting cap tightens with token confidence, and it keeps a nonzero
weight for truncated tokens.
\section{Conclusion}
\label{sec:conclusion}

We study training--inference mismatch in RLVR and show that it enters the
importance ratio as an additive displacement in log-odds, whose distribution
varies far less with token confidence than the ratio and has a heavy upper tail on
mixture-of-experts models. Based on this characterization, calibrated
importance sampling (CIS) truncates the upper tail of the displacement at a
single threshold, which yields a ratio cap that tightens as token confidence
increases. We prove that CIS bounds the second moment that makes exact
correction unstable, at the cost of a bias controlled by the truncated excess. On three mixture-of-experts models, CIS achieves the highest
five-benchmark average among the evaluated
baselines. Diagnostic analyses show that CIS places less truncation bias on
low-confidence tokens than TIS, and that truncating the
lower side reduces held-out accuracy.

\bibliography{iclr2027_conference}
\bibliographystyle{iclr2027_conference}

\appendix
\appendix

\clearpage
\section*{Appendix}

\startcontents[appendix]
\printcontents[appendix]{}{1}{\setcounter{tocdepth}{2}}

\vspace{1em}

\section{Empirical Anatomy of the Training--Inference Mismatch}
\label{app:mismatch_anatomy}

\Cref{sec:logit_model} derives the ratio--displacement identity
\[
k_t
=
p_t+(1-p_t)e^{\varepsilon_t},
\]
starting from the per-logit perturbation between the training and inference
engines. This appendix provides the measurements behind the empirical claims
used in that derivation and in \Cref{sec:why_correction,sec:optimal_operator}.
We first describe how the per-logit perturbation is measured
(\Cref{app:mm_protocol}). We then compare dense and mixture-of-experts
architectures (\Cref{app:mm_delta_tail}), examine how routing disagreement is
associated with the MoE tail (\Cref{app:mm_routing,app:mm_route_control}), and
verify that the heavy tail propagates from the per-logit perturbation
$\delta_j$ to the log-odds displacement $\varepsilon_t$
(\Cref{app:mm_eps_tail}). Finally, we examine the dependence of
$\varepsilon_t$ on token confidence (\Cref{app:mm_confidence}) and show how
the extreme upper tail dominates the second moment that enters the
importance-sampling variance bound (\Cref{app:mm_moment}).

\subsection{Measurement protocol}
\label{app:mm_protocol}

\paragraph{Models and engine configuration.}
We measure Qwen1.5-MoE-A2.7B-Chat and use Qwen1.5-14B-Chat as a dense
control. All measurements in
\Cref{app:mm_delta_tail,app:mm_routing,app:mm_route_control} use the base
checkpoints, with the training and inference engines loading exactly the same
model parameters. The training engine is HuggingFace Transformers in bf16
with teacher forcing. The inference engine is vLLM 0.26 in bf16 with
eager execution and prefix caching disabled.

\paragraph{Per-logit measurement.}
For each model, we randomly select 250 trajectories from the first shard of
the static mismatch measurement (\Cref{app:experimental-diagnostics}) and sample 24 generation positions from each
trajectory, giving 6,000 positions. At each position, the training engine
records its top-256 pre-softmax logits together with the corresponding
log-normalizer. On the inference side, we truncate the trajectory immediately
before that position and generate one token with \texttt{logprobs=256}, which
returns the next-token distribution at the same prefix. The trajectory and
position identifiers align on all sampled positions. The two top-256 sets
have mean intersection rates of $0.991$ on the dense model and $0.949$ on the
MoE, yielding $1{,}521{,}672$ and $1{,}457{,}477$ paired logit entries,
respectively. The complete measurement takes approximately eight minutes on
four A100 GPUs.

\paragraph{Gauge convention.}
In \Cref{sec:logit_model}, the training-engine logits are written as
$\mathbf z$ and the inference-engine logits as $\mathbf z+\boldsymbol\delta$.
Raw logits are defined only up to an additive constant at each position.
Because the displacement depends only on logit contrasts, adding the same
constant to every coordinate of $\boldsymbol\delta$ does not change
$\varepsilon_t$. We therefore use the normalized log-probability gauge
\begin{equation}
\delta_j
=
\log \pi_{\mathrm{infer}}(j\mid h_t)
-
\log \pi_{\mathrm{train}}(j\mid h_t).
\label{eq:app_delta_gauge}
\end{equation}
This differs from the raw logit perturbation only by a position-specific
constant and is uniquely determined by the two observed distributions. Under
this convention,
\[
\varepsilon_t
=
\operatorname{logit} p_t-\operatorname{logit} q_t,
\]
consistent with the convention used throughout the main text.

For the MoE model, the same measurement additionally records the top-4 expert
identities selected by each engine at every layer, together with the routing
margin between the fourth- and fifth-ranked experts. These records are used in
\Cref{app:mm_routing,app:mm_route_control}.

\subsection{Dense versus MoE: a heavy tail in the per-logit perturbation}
\label{app:mm_delta_tail}

\begin{figure}[!htbp]
    \centering
    \includegraphics[width=0.88\linewidth]{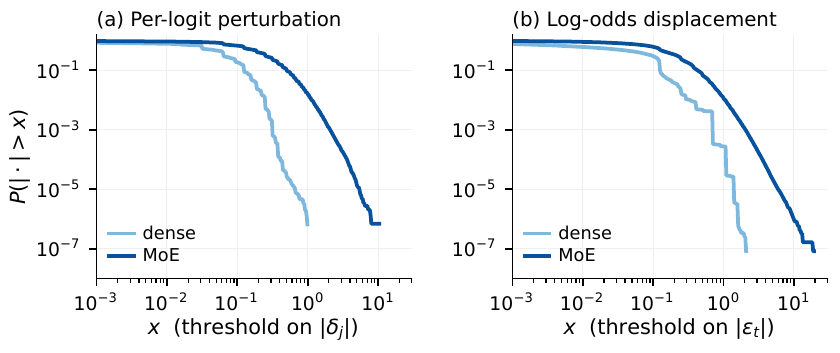}
    \caption{Complementary cumulative distributions of (a) $|\delta_j|$ and (b) $|\varepsilon_t|$ on the dense and MoE models.}
    \label{fig:appA_tails}
\end{figure}

\begin{table}[!htbp]
\centering
\caption{Per-logit perturbation $\delta_j$ on the base checkpoints.}
\label{tab:appA_delta_stats}
\small
\begin{tabular}{lcc}
\toprule
 & Dense & MoE \\
\midrule
Paired entries
    & $1{,}521{,}672$
    & $1{,}457{,}477$ \\
Median
    & $0.0000$
    & $0.0000$ \\
Mean
    & $-0.0013$
    & $-0.0036$ \\
$|\mathrm{mean}|/\mathrm{sd}$
    & $1.6\%$
    & $1.1\%$ \\
$\mathrm{sd}(\delta_j)$
    & $0.079$
    & $0.336$ \\
$q_{99}(|\delta_j|)$
    & $0.250$
    & $1.144$ \\
$q_{99.9}(|\delta_j|)$
    & $0.312$
    & $2.062$ \\
$\Pr(|\delta_j|>0.5)$
    & $0.003\%$
    & $9.87\%$ \\
$\Pr(|\delta_j|>1)$
    & $0.0001\%$
    & $1.52\%$ \\
$\max_j|\delta_j|$
    & $1.00$
    & $10.47$ \\
Range
    & $[-1.00,\,0.91]$
    & $[-7.70,\,10.47]$ \\
\bottomrule
\end{tabular}
\end{table}

\Cref{fig:appA_tails}(a) compares the full tails. The two distributions are
both concentrated near zero, but they separate sharply beyond the central
region. On the dense control, only $0.003\%$ of paired entries satisfy
$|\delta_j|>0.5$, and essentially none exceed one. On the MoE, $9.87\%$
exceed $0.5$, $1.52\%$ exceed one, and the largest perturbation reaches
$10.47$. The standard deviation is correspondingly $0.336$, more than four
times the dense value of $0.079$.

The difference is especially visible in the extreme quantiles:
$q_{99.9}(|\delta_j|)$ grows from $0.312$ on the dense model to $2.062$ on
the MoE. Thus the distinction between the two architectures is not merely a
uniform rescaling of a narrow numerical perturbation; the MoE develops a
pronounced tail that is absent from the dense control.

\subsection{Routing disagreement and the MoE tail}
\label{app:mm_routing}

To examine the source of this additional tail, we use the expert identities
recorded on the MoE. For each sampled position, let $N_{\mathrm{flip}}$
denote the number of layers, out of 24, for which the two engines select
different top-4 expert sets. Routing disagreement is common: the mean is
$3.69$ disagreeing layers per position, and only $8.0\%$ of positions select
the same expert set at all 24 layers.

\begin{figure}[!htbp]
    \centering
    \includegraphics[width=0.84\linewidth]{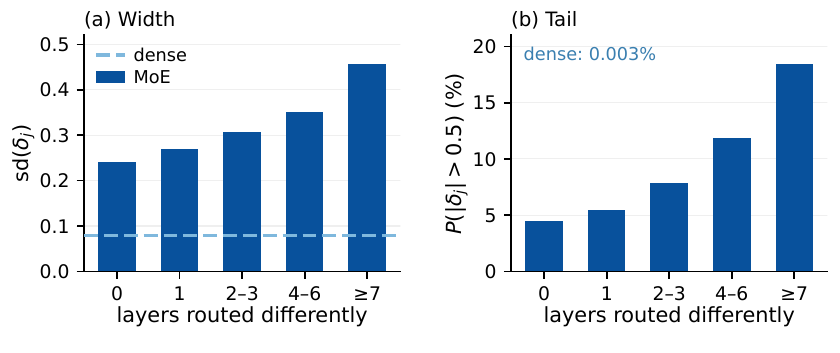}
    \caption{(a) Standard deviation of $\delta_j$ and (b) share of entries with $|\delta_j|>0.5$ against the number of layers with routing disagreement; dashed: dense model.}
    \label{fig:appA_routing}
\end{figure}

\begin{table}[!htbp]
\centering
\caption{Per-logit perturbation against the number of layers $N_{\mathrm{flip}}$ with routing disagreement.}
\label{tab:appA_routing}
\scriptsize
\setlength{\tabcolsep}{3.0pt}
\begin{tabular}{lccccc}
\toprule
$N_{\mathrm{flip}}$
    & $0$ & $1$ & $2$--$3$ & $4$--$6$ & $\ge 7$ \\
\midrule
Positions (\%)
    & $8.0$ & $15.6$ & $33.1$ & $28.2$ & $15.1$ \\
$\mathrm{sd}(\delta_j)$
    & $0.240$ & $0.269$ & $0.307$ & $0.352$ & $0.457$ \\
$\Pr(|\delta_j|>0.5)$ (\%)
    & $4.51$ & $5.42$ & $7.85$ & $11.86$ & $18.48$ \\
$\Pr(|\delta_j|>1)$ (\%)
    & $0.47$ & $0.68$ & $1.11$ & $1.65$ & $3.73$ \\
\bottomrule
\end{tabular}
\end{table}

The two direct perturbation statistics show a clear monotonic trend. From
positions with no routing disagreement to positions with disagreement in at
least seven layers, $\mathrm{sd}(\delta_j)$ increases from $0.240$ to
$0.457$, a $90\%$ increase, while
$\Pr(|\delta_j|>0.5)$ rises from $4.51\%$ to $18.48\%$, a factor of $4.1$.

The conclusion is unchanged under a stricter definition that counts only
routing differences whose gate margin exceeds $10^{-3}$. Under this
``hard-flip'' definition, the conditional standard deviations across the five
bins are
$0.284$, $0.320$, $0.405$, $0.460$, and $0.597$, while
$\Pr(|\delta_j|>0.5)$ increases from
$6.40\%$ to $9.42\%$, $14.72\%$, $20.52\%$, and $30.78\%$.
The association between routing disagreement and the width of the
perturbation is therefore robust to the definition of a route change.

\subsection{Conditional routing control}
\label{app:mm_route_control}

The previous analysis is observational: it stratifies positions by their
realized routing disagreement rather than intervening on the routes.
To separate the architecture-level difference from the additional effect
associated with routing disagreement at the current position, we compare the
dense control with MoE positions whose expert selections either agree at all
24 layers or disagree in at least one layer.

\begin{table}[!htbp]
\centering
\caption{Per-logit perturbation on the dense model and on MoE positions with and without routing disagreement.}
\label{tab:appA_route_control}
\small
\begin{tabular}{lccc}
\toprule
 & $\mathrm{sd}(\delta_j)$
 & $\Pr(|\delta_j|>0.5)$
 & $\Pr(|\delta_j|>1)$ \\
\midrule
Dense
    & $0.079$ & $0.003\%$ & $0.0001\%$ \\
MoE, $N_{\mathrm{flip}}=0$
    & $0.240$ & $4.51\%$ & $0.47\%$ \\
MoE, $N_{\mathrm{flip}}>0$
    & $0.343$ & $10.35\%$ & $1.62\%$ \\
\bottomrule
\end{tabular}
\end{table}

Routing disagreement at the current position therefore does not explain the
entire architecture gap. Even at $N_{\mathrm{flip}}=0$, the MoE standard
deviation is approximately three times that of the dense control.
One possible source is routing disagreement earlier in the prefix, whose
effect propagates through the hidden state to subsequent positions. The
existing measurement cannot isolate this contribution: only $0.25\%$ of
tokens have prefixes for which the two engines agree on every recorded route.

Accordingly, we interpret the results of
\Cref{fig:appA_routing,tab:appA_route_control} as evidence that
\emph{routing disagreement is a major amplifier of the tail, rather than the
sole source of the mismatch}. A causal decomposition would require an
intervention that explicitly replays identical routes across the two engines,
which is outside the scope of the measurements reported here.

\subsection{The heavy tail propagates from $\delta$ to $\varepsilon$}
\label{app:mm_eps_tail}

The identity in \Cref{sec:logit_model} maps the full perturbation vector
$\boldsymbol\delta$ into the sampled token's log-odds displacement,
\[
\varepsilon_t
=
\operatorname{logit}p_t-\operatorname{logit}q_t.
\]
We next verify that the architectural difference observed directly in
$\boldsymbol\delta$ remains visible after this mapping.

Unlike \Cref{app:mm_delta_tail}, which uses the 6,000-position logit dump,
this analysis uses the full static sampled-token measurement: approximately
$12.4$M dense tokens and $12.3$M MoE tokens on the corresponding base
checkpoints. We compute $\varepsilon_t$ whenever neither $p_t$ nor $q_t$
rounds exactly to one.

\begin{table}[!htbp]
\centering
\caption{Log-odds displacement $\varepsilon_t$ in the static measurement on the base checkpoints.}
\label{tab:appA_epsilon_stats}
\small
\begin{tabular}{lcc}
\toprule
 & Dense & MoE \\
\midrule
Sampled tokens
    & $12{,}389{,}115$
    & $12{,}293{,}115$ \\
Finite $\varepsilon_t$
    & $11{,}934{,}872$
    & $12{,}226{,}092$ \\
Finite fraction
    & $96.3\%$
    & $99.5\%$ \\
$\mathrm{sd}(\varepsilon_t)$
    & $0.0997$
    & $0.2947$ \\
$q_{95}(|\varepsilon_t|)$
    & $0.165$
    & $0.599$ \\
$q_{99}(|\varepsilon_t|)$
    & $0.336$
    & $1.055$ \\
$q_{99.9}(|\varepsilon_t|)$
    & $0.693$
    & $2.044$ \\
$\Pr(|\varepsilon_t|>1)$
    & $0.028\%$
    & $1.13\%$ \\
$\Pr(|\varepsilon_t|>2)$
    & $0$
    & $0.106\%$ \\
$\max_t|\varepsilon_t|$
    & $2.13$
    & $19.38$ \\
\bottomrule
\end{tabular}
\end{table}

As shown in \Cref{fig:appA_tails}(b), the dense displacement decays rapidly,
whereas the MoE retains a long tail. The $99.9$th percentile of
$|\varepsilon_t|$ is $0.693$ on the dense model and $2.044$ on the MoE, and
the maximum increases from $2.13$ to $19.38$. Thus the heavy tail identified
at the level of individual logit perturbations survives the nonlinear mapping
to the log-odds displacement that enters the importance ratio.

The dense CCDF contains several visible steps at high confidence. These arise
from the discrete numerical structure of probabilities close to one and
should not be interpreted as a continuous heavy tail. They do not affect the
qualitative dense--MoE separation.

\subsection{The displacement is approximately invariant to token confidence}
\label{app:mm_confidence}

The central design choice of CIS is to place a common upper threshold in the
displacement coordinate rather than a common threshold on the importance
ratio. This choice is motivated by the empirical observation that the scale of
$\varepsilon_t$ changes only mildly with token confidence, whereas the scale
of the ratio deviation changes by orders of magnitude.

We examine the static base-MoE measurement and retain tokens with
$1-p_t\ge10^{-7}$. \Cref{fig:appA_confidence}(a) plots conditional quantiles
of $\varepsilon_t$ against token uncertainty $1-p_t$.
\Cref{fig:appA_confidence}(b) compares the median absolute deviation of
$\varepsilon_t$ with that of $\log k_t$.

\begin{figure}[!htbp]
    \centering
    \includegraphics[width=0.88\linewidth]{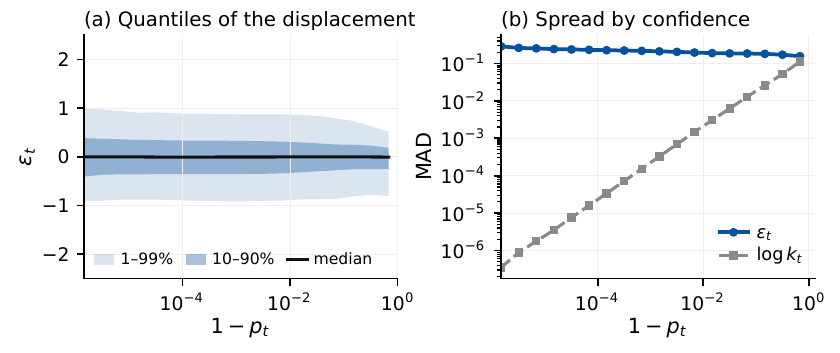}
    \caption{(a) Median, 10--90\% and 1--99\% intervals of $\varepsilon_t$ and (b) median absolute deviation of $\varepsilon_t$ and $\log k_t$ against $1-p_t$ on the base MoE.}
    \label{fig:appA_confidence}
\end{figure}

\begin{table}[!htbp]
\centering
\caption{Conditional statistics of $\varepsilon_t$ and $\log k_t$ on the base MoE.}
\label{tab:appA_confidence}
\scriptsize
\setlength{\tabcolsep}{2.8pt}
\begin{tabular}{lrrrrrr}
\toprule
$p_t$ range
& Tokens
& Med.\ $\varepsilon$
& MAD$(\varepsilon)$
& $q_{05}(\varepsilon)$
& $q_{95}(\varepsilon)$
& MAD$(\log k)$ \\
\midrule
$p_t<0.5$
& $2{,}824{,}021$
& $-0.0059$
& $0.156$
& $-0.375$
& $0.259$
& $1.15{\times}10^{-1}$ \\
$0.5$--$0.9$
& $2{,}337{,}554$
& $-0.0011$
& $0.175$
& $-0.376$
& $0.355$
& $3.96{\times}10^{-2}$ \\
$0.9$--$0.99$
& $1{,}951{,}864$
& $-0.0005$
& $0.186$
& $-0.453$
& $0.420$
& $6.43{\times}10^{-3}$ \\
$0.99$--$0.999$
& $1{,}549{,}915$
& $-0.0019$
& $0.203$
& $-0.492$
& $0.472$
& $6.78{\times}10^{-4}$ \\
$p_t>0.999$
& $3{,}562{,}738$
& $0.0000$
& $0.237$
& $-0.504$
& $0.501$
& $9.90{\times}10^{-6}$ \\
\bottomrule
\end{tabular}
\end{table}

Across 18 logarithmic bins spanning
$1-p_t\approx1.5\times10^{-6}$ to $0.68$, the conditional median of
$\varepsilon_t$ remains within $0.006$ of zero. Its median absolute deviation
ranges from $0.157$ to $0.288$, a factor of only $1.8$. The displacement
distribution is therefore not exactly confidence-invariant, and in
particular becomes moderately wider toward the highest-confidence region,
but the variation is small relative to the ratio coordinate.

Indeed, over the same bins, the median absolute deviation of $\log k_t$
changes by approximately $3.2\times10^5$. This contrast follows the mapping
\[
k_t-1=(1-p_t)(e^{\varepsilon_t}-1):
\]
the same displacement is strongly compressed in ratio space when
$p_t\rightarrow1$. Consistently, on a random subsample of 300,000 tokens the
Spearman correlation between $\varepsilon_t$ and $\log(1-p_t)$ is only
$-0.028$.
Below the floor $\kappa$ of \Cref{alg:cis}, $\varepsilon_t$ is dominated by the
storage resolution of the log-probabilities, so we repeat the comparison on the
$7.6$M tokens with $1-p_t\ge\kappa$, which is the region where CIS acts. There,
the median absolute deviation of $\varepsilon_t$ varies by a factor of $1.3$
across $18$ logarithmic bins, whereas that of $\log k_t$ varies by a factor of
$115$, and the Spearman correlation between $|\varepsilon_t|$ and
$\log(1-p_t)$ is $-0.12$.
The same contrast holds between the SGLang and FSDP engines used for training.
On the tokens with zero staleness in the training logs of the exact-ratio, TIS,
and IcePop runs, the median absolute deviation of $\varepsilon_t$ varies by a
factor of $1.8$ to $2.7$ across confidence bins above the floor, against $150$
to $220$ for $\log k_t$.
These measurements support treating the displacement scale as approximately
common across confidence levels, while making explicit that the approximation
is not exact.

\subsection{The second moment on the trained model}
\label{app:mm_moment}

\Cref{fig:mgf} and \Cref{fig:rq2_bv}a use the MoE after RL training. We load
the step-86 checkpoint of a CIS run on Qwen1.5-MoE-A2.7B into both engines of
\Cref{app:mm_protocol} and score the $510{,}915$ sampled tokens of $4{,}000$
GSM8K rollouts, which are generated by the step-57 snapshot of the same run.
The training engine scores them with teacher forcing and the inference engine
by prefill, so the two engines score the same tokens with the same
parameters. We compute $\varepsilon_t$ whenever neither $p_t$ nor $q_t$ rounds
to one, as in \Cref{app:mm_eps_tail}. The moments of the displacement grow
quickly with the order: $\E[e^{\varepsilon_t}]=7.07$ and
$\E[e^{2\varepsilon_t}]=9.2\times10^{6}$, and the weighted moments
$\E[((1-p_t)e^{\varepsilon_t})^{\alpha}]$ that enter \Cref{thm:exact_risk} are
$0.16$ at $\alpha=1$ and $22.9$ at $\alpha=2$. The second moment is therefore
determined by the upper tail of $\varepsilon_t$, which is the regime in which
the truncated moment of CIS differs from exact correction.

\section{Proofs and Technical Details for Section~\ref{sec:cis}}
\label{app:proofs}

\subsection{Preliminaries and Key Lemmas}
\label{app:lemmas}

In this subsection, we establish the minimal set of identities required for the
proofs of \Cref{thm:exact_risk,thm:cis_risk}. We adopt all notation and settings
from \Cref{sec:cis}. Throughout, $p_t,q_t\in(0,1)$, $k_t:=p_t/q_t$, and the
$n$ token contributions $(k_j,p_j,\ell'_j)_{j=1}^n$ are independent copies of
$(k_t,p_t,\ell'_t)$. For a measurable correction rule
$f:(0,\infty)\times(0,1)\to[0,\infty)$, define
\begin{equation}
\widehat G_f
:=
\frac1n\sum_{j=1}^n f(k_j,p_j)\,\ell'_j,
\qquad
G_0:=\E[k_t\ell'_t].
\label{eq:app_estimator}
\end{equation}
For $\lambda\ge0$, we write
\begin{equation}
f_\lambda(k,p)
:=
\min\big\{k,\;1+\lambda(1-p)\big\}.
\label{eq:f_lambda_def}
\end{equation}

\begin{lemma}[From logit perturbation to log-odds displacement]
\label{lem:logit_to_displacement}
Fix the sampled token $y=y_t$. Let
$p_t=\operatorname{softmax}(z)_y$ and
$q_t=\operatorname{softmax}(z+\delta)_y$, and define
\begin{equation}
w_j
:=
\frac{e^{z_j}}{\sum_{j'\neq y}e^{z_{j'}}},
\qquad j\neq y,
\label{eq:app_w_def}
\end{equation}
and
\begin{equation}
\varepsilon_t
:=
-\delta_y
+
\log\sum_{j\neq y}w_j e^{\delta_j}.
\label{eq:app_eps_def}
\end{equation}
Then
\begin{equation}
k_t
=
\frac{p_t}{q_t}
=
p_t+(1-p_t)e^{\varepsilon_t},
\qquad
k_t-1
=
(1-p_t)\big(e^{\varepsilon_t}-1\big),
\label{eq:app_ratio_displacement}
\end{equation}
and
\begin{equation}
\varepsilon_t
=
\log\frac{p_t}{1-p_t}
-
\log\frac{q_t}{1-q_t}.
\label{eq:app_logodds_identity}
\end{equation}
\end{lemma}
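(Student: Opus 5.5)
The plan is a direct computation with the two softmax expressions. We introduce the shorthand $S:=\sum_{j\neq y}e^{z_j}$, so that $p_t=e^{z_y}/(e^{z_y}+S)$ and $1-p_t=S/(e^{z_y}+S)$. For the inference engine, $q_t=e^{z_y+\delta_y}/(e^{z_y+\delta_y}+\sum_{j\ne y}e^{z_j+\delta_j})$. The sum in the denominator factors through the weights of \eqref{eq:app_w_def}:
\[
\sum_{j\ne y}e^{z_j+\delta_j}=S\sum_{j\ne y}w_je^{\delta_j}=S\,e^{\delta_y}e^{\varepsilon_t},
\]
where the second equality is exactly the definition \eqref{eq:app_eps_def} of $\varepsilon_t$. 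The point is that $\varepsilon_t$ is designed so that the competitor mass of the inference engine, measured relative to its own sampled-token mass, is the training engine's competitor mass rescaled by $e^{\varepsilon_t}$.

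We then form the ratio:
\[
k_t=\frac{p_t}{q_t}=\frac{e^{z_y}}{e^{z_y}+S}\cdot\frac{e^{z_y+\delta_y}+S e^{\delta_y}e^{\varepsilon_t}}{e^{z_y+\delta_y}}.
\]
The common factor $e^{\delta_y}$ cancels, which leaves
\[
k_t=\frac{e^{z_y}+S e^{\varepsilon_t}}{e^{z_y}+S}=p_t+(1-p_t)e^{\varepsilon_t}.
\]
Subtracting $1=p_t+(1-p_t)$ gives the second form $k_t-1=(1-p_t)(e^{\varepsilon_t}-1)$. This establishes \eqref{eq:app_ratio_displacement}.

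For the log-odds identity \eqref{eq:app_logodds_identity}, we read off both odds from the same expressions. The training-engine odds are $p_t/(1-p_t)=e^{z_y}/S$. The inference-engine odds are
\[
\frac{q_t}{1-q_t}=\frac{e^{z_y+\delta_y}}{S e^{\delta_y}e^{\varepsilon_t}}=\frac{e^{z_y}}{S}\,e^{-\varepsilon_t}.
\]
Taking logarithms and subtracting gives $\varepsilon_t=\operatorname{logit}p_t-\operatorname{logit}q_t$.

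As a consistency check, one can also derive the identity in reverse, solving $k_t=p_t+(1-p_t)e^{x}$ for $x$ using only $p_t$ and $q_t$. This confirms that $\varepsilon_t$ is well defined from the two log-probabilities. It is also invariant under adding a constant to all $\delta_j$, matching the gauge remark in \Cref{app:mm_protocol}.

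No real obstacle is expected. The only step needing care is the factorization of $\sum_{j\ne y}e^{z_j+\delta_j}$ through $w_j$ and $e^{\delta_y+\varepsilon_t}$. Its sign conventions must match the definition of $\varepsilon_t$, namely $-\delta_y$ outside the logarithm, so that the $e^{\delta_y}$ factors cancel in the ratio rather than compound.
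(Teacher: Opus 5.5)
Your proof is correct and takes essentially the same approach as the paper: a direct computation from the two softmax expressions, factoring $\sum_{j\neq y}e^{z_j+\delta_j}$ through the weights $w_j$ and using the result in both the ratio $p_t/q_t$ and the inference-engine odds $q_t/(1-q_t)$. The only difference is ordering: you rewrite the competitor mass as $S\,e^{\delta_y}e^{\varepsilon_t}$ at the start, while the paper expands $1/q_t$ and substitutes the definition of $\varepsilon_t$ at the end; this makes the cancellation of $e^{\delta_y}$ slightly more visible in your version.
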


\begin{proof}
Substituting the two softmax probabilities directly into the ratio gives
\begin{align*}
k_t
&=
\frac{
e^{z_y}/\big(e^{z_y}+\sum_{j\neq y}e^{z_j}\big)
}{
e^{z_y+\delta_y}/
\big(e^{z_y+\delta_y}+\sum_{j\neq y}e^{z_j+\delta_j}\big)
}\\
&\overset{(a)}{=}
\frac{e^{z_y}}{e^{z_y}+\sum_{j\neq y}e^{z_j}}
\left(
1+
e^{-\delta_y}
\frac{\sum_{j\neq y}e^{z_j+\delta_j}}{e^{z_y}}
\right)\\
&\overset{(b)}{=}
p_t
+
\frac{
e^{-\delta_y}\sum_{j\neq y}e^{z_j+\delta_j}
}{
e^{z_y}+\sum_{j\neq y}e^{z_j}
}\\
&\overset{(c)}{=}
p_t
+
\frac{\sum_{j\neq y}e^{z_j}}
{e^{z_y}+\sum_{j\neq y}e^{z_j}}
e^{-\delta_y}
\sum_{j\neq y}
\frac{e^{z_j}}
{\sum_{j'\neq y}e^{z_{j'}}}
e^{\delta_j}\\
&\overset{(d)}{=}
p_t
+
(1-p_t)e^{-\delta_y}
\sum_{j\neq y}w_j e^{\delta_j}\\
&\overset{(e)}{=}
p_t+(1-p_t)e^{\varepsilon_t},
\end{align*}
where (a) expands the reciprocal of the inference probability, (b) uses the
definition of $p_t$, (c) multiplies and divides by
$\sum_{j\neq y}e^{z_j}$, (d) uses
\[
1-p_t
=
\frac{\sum_{j\neq y}e^{z_j}}
{e^{z_y}+\sum_{j\neq y}e^{z_j}}
\]
together with \eqref{eq:app_w_def}, and (e) substitutes
\eqref{eq:app_eps_def}. Subtracting
$1=p_t+(1-p_t)$ immediately gives
\begin{align*}
k_t-1
&=
p_t+(1-p_t)e^{\varepsilon_t}
-p_t-(1-p_t)\\
&=
(1-p_t)\big(e^{\varepsilon_t}-1\big).
\end{align*}

To obtain the log-odds form, note that
\begin{align*}
\frac{q_t}{1-q_t}
&=
\frac{e^{z_y+\delta_y}}
{\sum_{j\neq y}e^{z_j+\delta_j}}\\
&\overset{(f)}{=}
\frac{e^{z_y}}
{\sum_{j\neq y}e^{z_j}}
\frac{e^{\delta_y}}
{\sum_{j\neq y}w_j e^{\delta_j}}\\
&\overset{(g)}{=}
\frac{p_t}{1-p_t}
\frac{e^{\delta_y}}
{\sum_{j\neq y}w_j e^{\delta_j}},
\end{align*}
where (f) factors
$\sum_{j\neq y}e^{z_j+\delta_j}$ using \eqref{eq:app_w_def}, and (g) uses
$p_t/(1-p_t)=e^{z_y}/\sum_{j\neq y}e^{z_j}$. Taking logarithms and
rearranging yields
\begin{align*}
\log\frac{p_t}{1-p_t}
-
\log\frac{q_t}{1-q_t}
&=
-\delta_y
+
\log\sum_{j\neq y}w_j e^{\delta_j}\\
&=
\varepsilon_t,
\end{align*}
which proves \eqref{eq:app_logodds_identity}.
\end{proof}

\begin{lemma}[Displacement-coordinate truncation identities]
\label{lem:truncation_identity}
Let $p\in(0,1)$ and suppose $k=p+(1-p)e^{\varepsilon}$. Then
\begin{equation}
\varepsilon\le0
\quad\Longrightarrow\quad
p<k\le1,
\qquad
\varepsilon>0
\quad\Longrightarrow\quad
k>1.
\label{eq:app_one_sided}
\end{equation}
Moreover, for every $\lambda\ge0$,
\begin{equation}
f_\lambda(k,p)
=
p+(1-p)\min\big\{e^{\varepsilon},\,1+\lambda\big\},
\label{eq:app_clipped_ratio}
\end{equation}
and
\begin{equation}
k-f_\lambda(k,p)
=
(1-p)\big(e^{\varepsilon}-1-\lambda\big)_+.
\label{eq:app_clipped_excess}
\end{equation}
Finally, for any fixed ratio cap $C\ge1$,
\begin{equation}
k\le C
\quad\Longleftrightarrow\quad
e^{\varepsilon}
\le
1+\frac{C-1}{1-p}.
\label{eq:app_fixed_cap}
\end{equation}
\end{lemma}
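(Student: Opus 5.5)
The statement is elementary, so the plan is direct algebra on the representation $k=p+(1-p)e^{\varepsilon}$, with $p\in(0,1)$ so that $1-p>0$ throughout. I will prove the four displayed claims in order.

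\emph{One-sidedness} (\cref{eq:app_one_sided}). Write $k-1=(1-p)(e^{\varepsilon}-1)$, as in \Cref{lem:logit_to_displacement}. Since $1-p>0$, the sign of $k-1$ equals the sign of $\varepsilon$.
\begin{itemize}[leftmargin=*]
\item If $\varepsilon\le0$, then $e^{\varepsilon}\in(0,1]$. Hence $k\le1$, and $k-p=(1-p)e^{\varepsilon}>0$ gives $k>p$.
\item If $\varepsilon>0$, then $e^{\varepsilon}>1$ and so $k>1$.
\end{itemize}

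\emph{Clipped-ratio identity} (\cref{eq:app_clipped_ratio}). The affine map $x\mapsto p+(1-p)x$ is strictly increasing because $1-p>0$. Increasing maps commute with $\min$, so
\[
p+(1-p)\min\{e^{\varepsilon},1+\lambda\}=\min\{p+(1-p)e^{\varepsilon},\;p+(1-p)(1+\lambda)\}.
\]
The first entry is $k$ and the second simplifies to $1+\lambda(1-p)$, which matches \cref{eq:f_lambda_def}.

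\emph{Excess identity} (\cref{eq:app_clipped_excess}). Subtract \cref{eq:app_clipped_ratio} from $k$:
\[
k-f_\lambda(k,p)=(1-p)\bigl(e^{\varepsilon}-\min\{e^{\varepsilon},1+\lambda\}\bigr)=(1-p)\bigl(e^{\varepsilon}-1-\lambda\bigr)_+ ,
\]
using the scalar identity $x-\min\{x,c\}=(x-c)_+$.

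\emph{Fixed-cap equivalence} (\cref{eq:app_fixed_cap}). From $k\le C$, subtract $p$ and divide by $1-p>0$. Both steps preserve the inequality in each direction, giving $e^{\varepsilon}\le (C-p)/(1-p)=1+(C-1)/(1-p)$. The assumption $C\ge1$ is not needed for the equivalence itself; it only ensures the bound is at least one.

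There is no real obstacle here. The only point requiring care is to use $p<1$ explicitly, since monotonicity and the division steps rely on $1-p$ being strictly positive.
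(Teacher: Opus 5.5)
Your proposal is correct and matches the paper's proof essentially step for step. Both use the sign of $k-1=(1-p)(e^{\varepsilon}-1)$ for one-sidedness, the strictly increasing affine map $x\mapsto p+(1-p)x$ to pass the minimum through, the scalar identity $x-\min\{x,c\}=(x-c)_+$ for the excess, and division by $1-p>0$ for the fixed-cap equivalence; your side remark that $C\ge1$ is not needed for the equivalence itself is also accurate.
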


\begin{proof}
If $\varepsilon\le0$, then $e^\varepsilon\le1$, so
\begin{align*}
k
&=
p+(1-p)e^\varepsilon\\
&\le
p+(1-p)
=
1.
\end{align*}
At the same time,
$k-p=(1-p)e^\varepsilon>0$, and hence $p<k\le1$. If
$\varepsilon>0$, then
\[
k-1
=
(1-p)(e^\varepsilon-1)
>
0,
\]
which gives $k>1$.

For the clipped ratio,
\begin{align*}
f_\lambda(k,p)
&\overset{(a)}{=}
\min\Big\{
p+(1-p)e^{\varepsilon},\;
1+\lambda(1-p)
\Big\}\\
&\overset{(b)}{=}
\min\Big\{
p+(1-p)e^{\varepsilon},\;
p+(1-p)(1+\lambda)
\Big\}\\
&\overset{(c)}{=}
p+(1-p)\min\big\{e^{\varepsilon},1+\lambda\big\},
\end{align*}
where (a) substitutes the ratio--displacement identity, (b) rewrites
$1+\lambda(1-p)=p+(1-p)(1+\lambda)$, and (c) uses that
$x\mapsto p+(1-p)x$ is strictly increasing. Therefore
\begin{align*}
k-f_\lambda(k,p)
&=
(1-p)
\left(
e^\varepsilon-\min\{e^\varepsilon,1+\lambda\}
\right)\\
&=
(1-p)(e^\varepsilon-1-\lambda)_+,
\end{align*}
using the pointwise identity
$x-\min\{x,c\}=(x-c)_+$.

For a fixed cap $C\ge1$,
\begin{align*}
k\le C
&\Longleftrightarrow
p+(1-p)e^\varepsilon\le C\\
&\Longleftrightarrow
e^\varepsilon
\le
\frac{C-p}{1-p}\\
&\Longleftrightarrow
e^\varepsilon
\le
1+\frac{C-1}{1-p},
\end{align*}
where the last equality follows from
$C-p=(1-p)+(C-1)$.
\end{proof}

\begin{lemma}[Bias--variance decomposition]
\label{lem:bias_variance}
Let $f:(0,\infty)\times(0,1)\to[0,\infty)$ be measurable with
$\E[f(k_t,p_t)^2\|\ell'_t\|_2^2]<\infty$, and abbreviate
$f_t:=f(k_t,p_t)$. Then
\begin{align}
\E\big\|\widehat G_f-G_0\big\|_2^2
&=
\big\|\E\big[(f_t-k_t)\ell'_t\big]\big\|_2^2
+
\frac1n
\E\big\|
f_t\ell'_t-\E[f_t\ell'_t]
\big\|_2^2
\label{eq:bias_variance_identity}\\
&\le
\big\|\E\big[(f_t-k_t)\ell'_t\big]\big\|_2^2
+
\frac1n
\E\big[f_t^2\|\ell'_t\|_2^2\big].
\label{eq:bias_variance}
\end{align}
\end{lemma}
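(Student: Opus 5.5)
The identity \eqref{eq:bias_variance_identity} is the usual bias--variance split of the mean-squared error of an i.i.d.\ sample mean; \eqref{eq:bias_variance} then follows by dropping a centering term. Set $X_j:=f(k_j,p_j)\,\ell'_j$. By the standing assumption, $X_1,\dots,X_n$ are i.i.d.\ copies of $X:=f_t\ell'_t$, and the integrability hypothesis gives $\E\|X\|_2^2<\infty$. Hence $\mu:=\E[X]$ is well defined, by Jensen or Cauchy--Schwarz. Since $\E[f_t^2\|\ell'_t\|_2^2]<\infty$ is assumed, I will also assume that $G_0=\E[k_t\ell'_t]$ is finite, so that $\E[(f_t-k_t)\ell'_t]=\mu-G_0$ is a well-defined vector. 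If $G_0$ were infinite both sides would be infinite, and I would just note this.

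First, I would write $\widehat G_f-G_0=(\widehat G_f-\mu)+(\mu-G_0)$ and expand the squared norm:
\[
\E\|\widehat G_f-G_0\|_2^2=\E\|\widehat G_f-\mu\|_2^2+2\,\E\langle \widehat G_f-\mu,\;\mu-G_0\rangle+\|\mu-G_0\|_2^2 .
\]
The vector $\mu-G_0$ is deterministic, and $\E[\widehat G_f]=\mu$ by linearity. So the cross term is $2\langle \E[\widehat G_f]-\mu,\mu-G_0\rangle=0$. The last term equals $\|\E[(f_t-k_t)\ell'_t]\|_2^2$, which is the bias term.

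Second, I would compute the variance term. We have
\[
\widehat G_f-\mu=\frac1n\sum_{j=1}^n(X_j-\mu),
\qquad
\E\|\widehat G_f-\mu\|_2^2=\frac1{n^2}\sum_{j,j'}\E\langle X_j-\mu,X_{j'}-\mu\rangle .
\]
For $j\ne j'$, independence and zero mean make each summand factor as $\langle \E[X_j-\mu],\E[X_{j'}-\mu]\rangle=0$. The $n$ diagonal terms each equal $\E\|X-\mu\|_2^2$, so the variance term is $\frac1n\E\|f_t\ell'_t-\E[f_t\ell'_t]\|_2^2$. This gives \eqref{eq:bias_variance_identity}.

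Third, for the inequality I would use $\E\|X-\mu\|_2^2=\E\|X\|_2^2-\|\mu\|_2^2\le\E\|X\|_2^2=\E[f_t^2\|\ell'_t\|_2^2]$.

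The only delicate point is integrability. The cross-term expectations and the interchange of expectation with the inner product require each $\E\|X_j\|_2<\infty$, and this follows from the second-moment hypothesis. Beyond that, the argument is routine.

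Looking ahead, the intended use is as follows. The bias term will be bounded via \eqref{eq:app_clipped_excess} together with $\|\ell'_t\|_2\le b$. The variance term with $f=f_\lambda$ will be bounded by $b^2\E[(p_t+(1-p_t)\min\{e^{\varepsilon_t},1+\lambda\})^2]$ using \eqref{eq:app_clipped_ratio}. Taking $f(k,p)=k$ instead makes the bias vanish and yields \Cref{thm:exact_risk}.
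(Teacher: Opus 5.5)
Your proof is correct and matches the paper's argument step for step: you insert and subtract $G_f=\E[f_t\ell'_t]$, kill the cross term via $\E[\widehat G_f]=G_f$, use independence to remove the off-diagonal terms of the double sum, and get the inequality by dropping $-\|G_f\|_2^2/n$ from $\E\|X_1\|_2^2-\|G_f\|_2^2$. Your integrability bookkeeping is a harmless refinement that the paper leaves implicit; the aside that both sides would be infinite if $G_0$ were not finite is not quite accurate (the expressions would then be undefined), but it is moot because the lemma takes $G_0=\E[k_t\ell'_t]$ as a given vector.
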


\begin{proof}
Write $X_j:=f(k_j,p_j)\ell'_j$ and $G_f:=\E[X_j]$. Then
\begin{align*}
\E\big\|\widehat G_f-G_0\big\|_2^2
&\overset{(a)}{=}
\E\big\|
(\widehat G_f-G_f)+(G_f-G_0)
\big\|_2^2\\
&\overset{(b)}{=}
\|G_f-G_0\|_2^2
+
\E\big\|\widehat G_f-G_f\big\|_2^2\\
&\overset{(c)}{=}
\|G_f-G_0\|_2^2
+
\frac1{n^2}
\sum_{i,j=1}^n
\E\big\langle
X_i-G_f,\;
X_j-G_f
\big\rangle\\
&\overset{(d)}{=}
\|G_f-G_0\|_2^2
+
\frac1n
\E\|X_1-G_f\|_2^2\\
&\overset{(e)}{=}
\big\|\E[(f_t-k_t)\ell'_t]\big\|_2^2
+
\frac1n
\E\big\|
f_t\ell'_t-\E[f_t\ell'_t]
\big\|_2^2.
\end{align*}
Here (a) inserts and subtracts $G_f$. The cross term vanishes in (b) because
$\E[\widehat G_f]=G_f$. Step (c) expands
$\widehat G_f-G_f=n^{-1}\sum_j(X_j-G_f)$, while (d) uses independence:
for $i\neq j$,
\[
\E\langle X_i-G_f,X_j-G_f\rangle=0,
\]
and the $n$ diagonal terms are identical. Finally,
\[
G_f-G_0
=
\E[f_t\ell'_t]-\E[k_t\ell'_t]
=
\E[(f_t-k_t)\ell'_t],
\]
which gives (e) and proves \eqref{eq:bias_variance_identity}.

For the variance term,
\begin{align*}
\frac1n\E\|X_1-G_f\|_2^2
&=
\frac1n
\left(
\E\|X_1\|_2^2-\|G_f\|_2^2
\right)\\
&\le
\frac1n\E\|X_1\|_2^2\\
&=
\frac1n
\E\big[f_t^2\|\ell'_t\|_2^2\big],
\end{align*}
where the inequality simply drops the nonpositive term
$-\|G_f\|_2^2/n$. This gives \eqref{eq:bias_variance}.
\end{proof}

\paragraph{Remark (correlated token contributions).}
Independence enters only when the off-diagonal covariance terms are removed.
If the contributions are identically distributed but correlated, define
\begin{equation}
\bar\rho_f
:=
\frac{
\frac{1}{n(n-1)}
\sum_{i\neq j}
\E\langle X_i-G_f,\;X_j-G_f\rangle
}{
\E\|X_1-G_f\|_2^2
}.
\label{eq:app_pairwise_corr}
\end{equation}
If $\bar\rho_f\le\bar\rho$, then
\begin{align*}
\frac1{n^2}
\sum_{i,j=1}^n
\E\big\langle X_i-G_f,\;X_j-G_f\big\rangle
&=
\frac1{n^2}
\left(
n\E\|X_1-G_f\|_2^2
+
\sum_{i\neq j}
\E\langle X_i-G_f,X_j-G_f\rangle
\right)\\
&=
\frac{1+(n-1)\bar\rho_f}{n}
\E\|X_1-G_f\|_2^2\\
&\le
\frac1{n_{\mathrm{eff}}}
\E\|X_1-G_f\|_2^2,
\end{align*}
where
\begin{equation}
n_{\mathrm{eff}}
:=
\frac{n}{1+(n-1)\bar\rho}.
\label{eq:n_eff}
\end{equation}
Hence the variance bounds below remain valid with $n$ replaced by
$n_{\mathrm{eff}}$. Appendix~D reports the corresponding empirical dependence
among the token contributions.

\subsection{Proof of Theorem~\ref{thm:exact_risk}}
\label{app:proof_exact_risk}

We first state the formal version of \Cref{thm:exact_risk}.

\begin{theorem}[Formal version of \Cref{thm:exact_risk}]
\label{thm:exact_risk_formal}
Under the conventions of \Cref{app:lemmas}, assume
$\|\ell'_t\|_2\le b$. Then
\begin{equation}
\E\big\|\widehat G-G_0\big\|_2^2
\le
\frac{b^2}{n}
\E\Big[
p_t^2
+
2p_t(1-p_t)e^{\varepsilon_t}
+
(1-p_t)^2e^{2\varepsilon_t}
\Big],
\label{eq:exact_risk_formal}
\end{equation}
where $\widehat G:=n^{-1}\sum_{j=1}^n k_j\ell'_j$.
\end{theorem}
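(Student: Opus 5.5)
The plan is to treat exact correction as the special case $f(k,p)=k$ of the general estimator $\widehat G_f$ in \eqref{eq:app_estimator} and apply \Cref{lem:bias_variance}. With $f_t=k_t$ the bias term $\|\E[(f_t-k_t)\ell'_t]\|_2^2$ vanishes identically, since $\E[\widehat G]=\E[k_t\ell'_t]=G_0$ by definition of $G_0$. Inequality \eqref{eq:bias_variance} then leaves
\[
\E\big\|\widehat G-G_0\big\|_2^2\le\frac1n\E\big[k_t^2\|\ell'_t\|_2^2\big].
\]

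Next we use the bound $\|\ell'_t\|_2\le b$ to get $\E[k_t^2\|\ell'_t\|_2^2]\le b^2\E[k_t^2]$. We then substitute the ratio--displacement identity \eqref{eq:app_ratio_displacement} of \Cref{lem:logit_to_displacement}, $k_t=p_t+(1-p_t)e^{\varepsilon_t}$, and expand the square:
\[
k_t^2=p_t^2+2p_t(1-p_t)e^{\varepsilon_t}+(1-p_t)^2e^{2\varepsilon_t}.
\]
Taking expectations gives exactly \eqref{eq:exact_risk_formal}.

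The only delicate point is integrability. \Cref{lem:bias_variance} requires $\E[f_t^2\|\ell'_t\|_2^2]<\infty$, and this can fail when $\E[e^{2\varepsilon_t}]=\infty$. So I would split into two cases:
\begin{itemize}
\item If the right-hand side of \eqref{eq:exact_risk_formal} is infinite, the inequality is trivial.
\item If it is finite, then $\E[k_t^2\|\ell'_t\|_2^2]\le b^2\E[k_t^2]<\infty$, the hypothesis of \Cref{lem:bias_variance} holds, and the argument above goes through.
\end{itemize}

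For correlated tokens, the Remark after \Cref{lem:bias_variance} lets us replace $n$ by $n_{\mathrm{eff}}$ with no further work.
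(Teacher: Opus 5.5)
Your proposal is correct and follows the paper's proof essentially step for step. The paper also specializes \Cref{lem:bias_variance} to $f=k$, so the bias term vanishes. It dismisses the case $\E[k_t^2]=\infty$ as trivial and otherwise bounds the error by $\frac{b^2}{n}\E[k_t^2]$, expanded through the ratio--displacement identity.
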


\begin{proof}[Proof of \Cref{thm:exact_risk}]
By \Cref{lem:logit_to_displacement}, the right-hand side of
\eqref{eq:exact_risk_formal} equals $\frac{b^2}{n}\E[k_t^2]$. If
$\E[k_t^2]=\infty$, there is nothing to prove. Assume therefore that
$\E[k_t^2]<\infty$. Since $\|\ell'_t\|_2\le b$,
\begin{align*}
\E\big[k_t^2\|\ell'_t\|_2^2\big]
&\le
b^2\E\big[k_t^2\big]\\
&
\infty,
\end{align*}
so \Cref{lem:bias_variance} applies with $f_t=k_t$. Because exact correction
has no truncation bias,
\begin{align*}
\E\big\|\widehat G-G_0\big\|_2^2
&=
\frac1n
\E\big\|
k_t\ell'_t-\E[k_t\ell'_t]
\big\|_2^2\\
&\overset{(a)}{\le}
\frac1n
\E\big[k_t^2\|\ell'_t\|_2^2\big]\\
&\overset{(b)}{\le}
\frac{b^2}{n}
\E[k_t^2]\\
&\overset{(c)}{=}
\frac{b^2}{n}
\E\left[
\big(
p_t+(1-p_t)e^{\varepsilon_t}
\big)^2
\right]\\
&=
\frac{b^2}{n}
\E\left[
p_t^2
+
2p_t(1-p_t)e^{\varepsilon_t}
+
(1-p_t)^2e^{2\varepsilon_t}
\right].
\end{align*}
Here (a) drops the nonpositive centered-mean term in the variance, (b) uses
$\|\ell'_t\|_2\le b$, and (c) applies
\Cref{lem:logit_to_displacement}.
\end{proof}

\paragraph{Remark (where the instability enters).}
The proof imposes no structural assumption on the law of $\varepsilon_t$
beyond the moments that appear explicitly in the bound. Exact correction has
zero truncation bias, but its variance depends on the untruncated second-moment
term $\E[(1-p_t)^2e^{2\varepsilon_t}]\le\E[e^{2\varepsilon_t}]$.

\paragraph{Remark (correlated token contributions).}
Under correlated token contributions, the same argument holds with $n$
replaced by $n_{\mathrm{eff}}$ from \eqref{eq:n_eff}. The comparison below
between the untruncated and truncated second moments is otherwise unchanged.

\subsection{Proof of Theorem~\ref{thm:cis_risk}}
\label{app:proof_cis_risk}

We first state the formal version of \Cref{thm:cis_risk}. Throughout this
subsection, let $m_t:=\min\{e^{\varepsilon_t},1+\lambda\}$.

\begin{theorem}[Formal version of \Cref{thm:cis_risk}]
\label{thm:cis_risk_formal}
Under the conventions of \Cref{app:lemmas}, assume
$\|\ell'_t\|_2\le b$. Then for every finite $\lambda\ge0$,
\begin{align}
\E\big\|\widehat G_{f_\lambda}-G_0\big\|_2^2
&\le
b^2
\left(
\E\big[
(1-p_t)(e^{\varepsilon_t}-1-\lambda)_+
\big]
\right)^2
\nonumber\\
&\qquad
+
\frac{b^2}{n}
\E\Big[
\big(
p_t+(1-p_t)\min\{e^{\varepsilon_t},1+\lambda\}
\big)^2
\Big].
\label{eq:cis_risk_formal}
\end{align}
\end{theorem}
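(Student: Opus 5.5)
The plan is to apply the bias--variance decomposition of \Cref{lem:bias_variance} with $f=f_\lambda$, and then bound the bias and variance terms separately using the truncation identities of \Cref{lem:truncation_identity}.

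First I check the integrability hypothesis of \Cref{lem:bias_variance}. By \eqref{eq:app_clipped_ratio}, $0\le f_\lambda(k_t,p_t)=p_t+(1-p_t)m_t\le 1+\lambda$ pointwise. Together with $\|\ell'_t\|_2\le b$, this gives $\E[f_t^2\|\ell'_t\|_2^2]\le b^2(1+\lambda)^2<\infty$ for every finite $\lambda$. This is where finiteness of $\lambda$ is used, and no moment assumption on $\varepsilon_t$ is needed for the variance side. \Cref{lem:bias_variance} then yields \eqref{eq:bias_variance}.

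For the variance term, I would write
\[
\tfrac1n\E[f_t^2\|\ell'_t\|_2^2]\le \tfrac{b^2}{n}\E[f_t^2],
\]
and substitute $f_t=p_t+(1-p_t)\min\{e^{\varepsilon_t},1+\lambda\}$ from \eqref{eq:app_clipped_ratio}. This gives exactly the second term of \eqref{eq:cis_risk_formal}.

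For the bias term, I would use \eqref{eq:app_clipped_excess}: $k_t-f_t=(1-p_t)(e^{\varepsilon_t}-1-\lambda)_+\ge0$. Then apply Jensen's inequality for the norm (equivalently the triangle inequality for Bochner integrals):
\[
\|\E[(f_t-k_t)\ell'_t]\|_2\le \E[|k_t-f_t|\,\|\ell'_t\|_2]\le b\,\E[(1-p_t)(e^{\varepsilon_t}-1-\lambda)_+].
\]
Squaring gives the first term.

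The only delicate point is that the bias expectation $\E[(1-p_t)(e^{\varepsilon_t}-1-\lambda)_+]$ may be infinite, or $G_0=\E[k_t\ell'_t]$ may be ill-defined. I would handle this by a case split. If $\E[k_t]=\infty$, the right-hand side is $+\infty$ and there is nothing to prove. Otherwise $\E[(1-p_t)(e^{\varepsilon_t}-1-\lambda)_+]\le\E[k_t]<\infty$, so $G_0$ is well defined and the Jensen step is legitimate. Apart from this integrability bookkeeping, the argument is routine.
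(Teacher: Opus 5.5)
Your proposal is correct and follows essentially the same route as the paper's proof: you verify integrability via $f_\lambda\le 1+\lambda$, apply \Cref{lem:bias_variance}, bound the bias by the triangle inequality together with the clipped-excess identity \eqref{eq:app_clipped_excess}, and bound the variance term by substituting the displacement form \eqref{eq:app_clipped_ratio}. Your case split on whether $\E[k_t]<\infty$ is a small addition that makes explicit why $G_0$ is well defined, a point the paper leaves implicit.
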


\begin{proof}[Proof of \Cref{thm:cis_risk}]
Since
\[
f_\lambda(k_t,p_t)
\le
1+\lambda(1-p_t)
\le
1+\lambda,
\]
we have
$\E[f_\lambda(k_t,p_t)^2\|\ell'_t\|_2^2]
\le b^2(1+\lambda)^2<\infty$, so
\Cref{lem:bias_variance} applies:
\begin{align*}
\E\big\|
\widehat G_{f_\lambda}-G_0
\big\|_2^2
&\le
\big\|
\E[
(f_\lambda(k_t,p_t)-k_t)\ell'_t
]
\big\|_2^2\\
&\qquad
+
\frac1n
\E\big[
f_\lambda(k_t,p_t)^2
\|\ell'_t\|_2^2
\big].
\end{align*}

For the bias term, $f_\lambda(k_t,p_t)\le k_t$, and hence
\begin{align*}
\big\|
\E[
(f_\lambda(k_t,p_t)-k_t)\ell'_t
]
\big\|_2
&\le
\E\left[
\big(k_t-f_\lambda(k_t,p_t)\big)
\|\ell'_t\|_2
\right]\\
&\le
b\,
\E\left[
k_t-f_\lambda(k_t,p_t)
\right]\\
&\overset{(a)}{=}
b\,
\E\left[
(1-p_t)
(e^{\varepsilon_t}-1-\lambda)_+
\right],
\end{align*}
where (a) uses \eqref{eq:app_clipped_excess}.

For the variance term, using \eqref{eq:app_clipped_ratio},
\begin{align*}
\E\big[
f_\lambda(k_t,p_t)^2
\|\ell'_t\|_2^2
\big]
&\le
b^2
\E\big[
f_\lambda(k_t,p_t)^2
\big]\\
&\overset{(b)}{=}
b^2
\E\left[
\big(
p_t+(1-p_t)m_t
\big)^2
\right],
\end{align*}
where (b) substitutes the displacement-coordinate form of the clipped ratio.

Combining the two bounds gives \eqref{eq:cis_risk_formal}.
\end{proof}

\paragraph{Remark (comparison with exact correction).}
The difference between \Cref{thm:exact_risk_formal} and
\Cref{thm:cis_risk_formal} is the replacement of the untruncated displacement
factor $e^{\varepsilon_t}$ by its truncated counterpart $m_t$, together with
the corresponding truncation bias.
Pointwise,
\begin{align*}
\min\big(e^{\varepsilon_t},1+\lambda\big)^2
&\le
e^{2\varepsilon_t},\\
\min\big(e^{\varepsilon_t},1+\lambda\big)^2
&\le
(1+\lambda)^2,
\end{align*}
and therefore
\begin{equation}
\E\big[
\min(e^{\varepsilon_t},1+\lambda)^2
\big]
\le
\min\left\{
\E[e^{2\varepsilon_t}],
\;
(1+\lambda)^2
\right\}.
\label{eq:app_truncated_second_moment}
\end{equation}
Moreover, $p_t+(1-p_t)m_t\le\min\{k_t,\,1+\lambda\}$, so the variance term of
\eqref{eq:cis_risk_formal} satisfies
$\E[(p_t+(1-p_t)m_t)^2]\le\min\{\E[k_t^2],\,(1+\lambda)^2\}$.
Thus, regardless of how heavy the positive tail of $\varepsilon_t$ becomes,
the second-moment term entering the CIS variance bound is at most
$(1+\lambda)^2$.

The corresponding bias remains controlled by the first moment of the truncated
excess, since
\begin{align*}
(1-p_t)(e^{\varepsilon_t}-1-\lambda)_+
&\le
(e^{\varepsilon_t}-1-\lambda)_+\\
&\le
e^{\varepsilon_t}
\mathbf 1\{
e^{\varepsilon_t}>1+\lambda
\}\\
&\le
e^{\varepsilon_t}.
\end{align*}
CIS therefore trades a finite upper-tail bias for a uniformly bounded
second-moment contribution, which is the bias--variance trade-off described in
Section~3.3.

\paragraph{Remark (correlated token contributions).}
The same result holds with $n$ replaced by $n_{\mathrm{eff}}$ under the
correlated-token setting above. The truncation term and the bound
\eqref{eq:app_truncated_second_moment} are pointwise and therefore do not depend
on the effective sample size.
\subsection{Numerical Floor in Algorithm~1}
\label{app:numerical_floor}

The theoretical operator in \eqref{eq:f_lambda_def} uses the uncertainty
$1-p_t$. Algorithm~1 replaces it by
$\phi_\kappa(p):=\max\{1-p,\kappa\}$, where $0<\kappa\le1$, and applies
\begin{equation}
f_{\lambda,\kappa}(k,p)
:=
\min\big\{
k,\;
1+\lambda\phi_\kappa(p)
\big\}.
\label{eq:app_floor_operator}
\end{equation}
This modification prevents the confidence-dependent cap from becoming narrower
than the numerical resolution of the stored log-probabilities.

\begin{corollary}[Effect of the numerical floor]
\label{cor:floor_bound}
For every $\lambda\ge0$ and $\kappa\in(0,1]$,
\begin{equation}
f_\lambda(k,p)
\le
f_{\lambda,\kappa}(k,p)
\le
k,
\label{eq:app_floor_order}
\end{equation}
and consequently
\begin{equation}
0
\le
k-f_{\lambda,\kappa}(k,p)
\le
k-f_\lambda(k,p).
\label{eq:app_floor_bias_order}
\end{equation}
Moreover,
\begin{equation}
f_{\lambda,\kappa}(k,p)
\le
1+\lambda.
\label{eq:app_floor_upper_bound}
\end{equation}
Hence, if $\|\ell'_t\|_2\le b$,
\begin{equation}
\E\big[
f_{\lambda,\kappa}(k_t,p_t)^2
\|\ell'_t\|_2^2
\big]
\le
b^2(1+\lambda)^2.
\label{eq:app_floor_second_moment}
\end{equation}
\end{corollary}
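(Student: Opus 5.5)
The whole corollary follows pointwise from comparing the two caps. Write $c_\lambda(p):=1+\lambda(1-p)$ for the cap of \eqref{eq:f_lambda_def} and $c_{\lambda,\kappa}(p):=1+\lambda\phi_\kappa(p)$ for the floored cap of \eqref{eq:app_floor_operator}. The first step is to observe that $\phi_\kappa(p)=\max\{1-p,\kappa\}\ge 1-p$ and $\lambda\ge0$. Hence $c_{\lambda,\kappa}(p)\ge c_\lambda(p)$.

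Since $x\mapsto\min\{k,x\}$ is nondecreasing, we get $f_\lambda(k,p)=\min\{k,c_\lambda(p)\}\le\min\{k,c_{\lambda,\kappa}(p)\}=f_{\lambda,\kappa}(k,p)$. The upper inequality $f_{\lambda,\kappa}(k,p)\le k$ is immediate from the definition as a minimum with $k$. Together these give \eqref{eq:app_floor_order}. Subtracting each term of \eqref{eq:app_floor_order} from $k$ reverses the order and yields \eqref{eq:app_floor_bias_order}. As a consequence, the truncation-excess term $\E[(1-p_t)(e^{\varepsilon_t}-1-\lambda)_+]$ from \Cref{lem:truncation_identity} still bounds the bias of the floored operator.

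For \eqref{eq:app_floor_upper_bound}, we use $\kappa\le1$ and $1-p<1$, which give $\phi_\kappa(p)\le1$. Therefore $f_{\lambda,\kappa}(k,p)\le c_{\lambda,\kappa}(p)\le 1+\lambda$. This is the one place where the assumption $\kappa\in(0,1]$ is used, and it is the only step that needs any care. A floor $\kappa>1$ would break the uniform bound, since the cap would then exceed $1+\lambda$.

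Finally, squaring this bound is legitimate because $f_{\lambda,\kappa}\ge0$: it is the minimum of the positive quantity $k$ and a cap that is at least $1$. Multiplying by $\|\ell'_t\|_2^2\le b^2$ and taking expectations gives \eqref{eq:app_floor_second_moment}. This also means \Cref{lem:bias_variance} applies to $f_{\lambda,\kappa}$. The bias--variance bound of \Cref{thm:cis_risk_formal} therefore carries over to \Cref{alg:cis}, with the bias term no larger than in the unfloored case and the variance term bounded by $b^2(1+\lambda)^2/n$.
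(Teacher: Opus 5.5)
Your proof is correct and follows the paper's argument step for step: you order the caps via $\phi_\kappa(p)\ge 1-p$ and the monotonicity of $c\mapsto\min\{k,c\}$, subtract from $k$ to get the bias ordering, and use $\phi_\kappa(p)\le 1$ (from $\kappa\le 1$) to get the uniform bound $1+\lambda$ before multiplying by $b^2$. Your note that $f_{\lambda,\kappa}\ge 0$ justifies the squaring step, which the paper leaves implicit, and your closing remarks that the bias and variance bounds of \Cref{thm:cis_risk_formal} carry over to the floored operator are correct additions beyond what the statement requires.
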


\begin{proof}
By definition,
$\phi_\kappa(p)=\max\{1-p,\kappa\}\ge1-p$. Since $\lambda\ge0$,
\[
1+\lambda\phi_\kappa(p)
\ge
1+\lambda(1-p).
\]
The map $c\mapsto\min\{k,c\}$ is nondecreasing, so
\begin{align*}
f_{\lambda,\kappa}(k,p)
&=
\min\{k,1+\lambda\phi_\kappa(p)\}\\
&\ge
\min\{k,1+\lambda(1-p)\}\\
&=
f_\lambda(k,p).
\end{align*}
The definition of the minimum also gives
$f_{\lambda,\kappa}(k,p)\le k$, which proves
\eqref{eq:app_floor_order}. Subtracting both sides from $k$ yields
\eqref{eq:app_floor_bias_order}.

Since both $1-p$ and $\kappa$ lie in $(0,1]$,
$\phi_\kappa(p)\le1$. Therefore
\begin{align*}
f_{\lambda,\kappa}(k,p)
&\le
1+\lambda\phi_\kappa(p)\\
&\le
1+\lambda,
\end{align*}
which gives \eqref{eq:app_floor_upper_bound}. If
$\|\ell'_t\|_2\le b$, it follows immediately that
\begin{align*}
\E\big[
f_{\lambda,\kappa}(k_t,p_t)^2
\|\ell'_t\|_2^2
\big]
&\le
b^2
\E\big[
f_{\lambda,\kappa}(k_t,p_t)^2
\big]\\
&\le
b^2(1+\lambda)^2,
\end{align*}
proving \eqref{eq:app_floor_second_moment}.
\end{proof}

\paragraph{Remark.}
The floor raises the cap only at the extreme high-confidence end, so
$f_{\lambda,\kappa}$ removes no more of the exact importance weight than the
idealized operator $f_\lambda$. At the same time,
\eqref{eq:app_floor_upper_bound} keeps the applied weight uniformly bounded by
$1+\lambda$. The numerical safeguard in Algorithm~1 therefore preserves the
central variance-control property of CIS while preventing the theoretical band
from becoming narrower than the storage resolution.
\section{Experimental Details}
\label{app:experimental}

\subsection{Implementation and Hardware}
\label{app:experimental-implementation}

\noindent\textbf{Recipe.}
Every arm uses the same recipe: learning rate $3\times10^{-6}$, weight decay $0.01$, PPO clip range $(0.2,0.28)$, no KL penalty, four samples per prompt at temperature one, at most 1{,}024 generated tokens, at most two versions of rollout staleness, three epochs over the 7{,}473 GSM8K training problems (87 optimizer steps), and one node with eight A100-80GB GPUs per arm, four serving SGLang and four training with FSDP and flash attention.

\noindent\textbf{Operators.}
Training uses AReaL v2.0.0 with its decoupled loss, in which the rollout-side and training-side log-probabilities are both available per token. All operators are implemented in the same token-level branch that AReaL uses for TIS and IcePop, so that every arm differs only in the weight on the token loss; CIS multiplies the loss by the weight $\min\{k_t,1+\lambda\varphi_t\}$ of \Cref{alg:cis}, with $\varphi_t=\max(1-p_t,\kappa)$ and $p_t$ taken from the training-side log-probability, and the weight carries no gradient. The two-sided band of Table~\ref{tab:ablation} additionally lifts the displacement from below to $e^{\varepsilon_t}\ge1/2$, which maps through \Cref{eq:k_geometry} to the weight $\max\{\min\{k_t,1+\lambda\varphi_t\},\,(1+p_t)/2\}$; CIS applies no lower threshold. Table~\ref{tab:baselines} lists the operator and the hyperparameters of every arm. Beyond the operators, the changes to the framework are longer worker-readiness timeouts, a numerically corrected binary-KL metric for KPop$^\dagger$, product weights over a response for Seq-TIS and Seq-MIS, static loss scaling for the fp16 arm, and a token-level dump hook. The training engine must use flash attention; with packed sequences the scaled-dot-product fallback attends across sequence boundaries and corrupts the training-side log-probabilities, which we detected as a median $\log k_t$ far from zero on every method. Weights are sent from the trainer to the rollout engine by collective communication after every optimizer step. On Qwen1.5-MoE each arm takes about 1.0 to 1.4 hours for the 87 steps, and Appendix~\ref{app:experimental-cost} breaks this time down by phase. DeepSeek-V2-Lite \citep{deepseekai2024deepseekv2} and Qwen3-30B-A3B \citep{yang2025qwen3} use the same recipe with vLLM rollouts, weights reloaded from disk after every step, and optimizer states in bf16; the thinking mode of Qwen3-30B-A3B is disabled in both training and evaluation.

\begin{table}[!tb]
\centering
\caption{Operators and hyperparameters of the arms in Table~\ref{tab:main}. $K=\prod_t k_t$ is the product over a response and $\varphi_t=\max(1-p_t,\kappa)$.}
\label{tab:baselines}
\footnotesize\setlength{\tabcolsep}{4pt}
\renewcommand{\arraystretch}{1.05}
\begin{tabular}{lll}
\toprule
Method & Weight on the token loss & Hyperparameters \\
\midrule
No correction & $1$ (decoupled loss disabled) & --- \\
Exact ratio & $k_t$ & clamp to $[10^{-6},10^{6}]$ \\
TIS & $\min(k_t,C)$ & $C=2$ \\
IcePop & $k_t\,\1\{k_t\in[\alpha,\beta]\}$ & $[\alpha,\beta]=[0.5,5]$ \\
KPop$^\dagger$ & $k_t\,\1\{\max(\mathrm{KL}_{\mathrm{B}}(p\|q),\mathrm{KL}_{\mathrm{B}}(q\|p))\le\phi\}$, float64 & $\phi=2$ \\
\midrule
Seq-TIS & $\min(K,C)$ for every token of the response & $C=2$ \\
Seq-MIS & $K\,\1\{K\in[\alpha,\beta]\}$ for every token of the response & $[\alpha,\beta]=[0.5,2]$ \\
GSPO & sequence-level surrogate, $k_t$ uncorrected & clip $(3\times10^{-4},4\times10^{-4})$ \\
FP16 & $1$, rollout and training in fp16 & loss scale $1024$ \\
\midrule
CIS (ours) & $\min\{k_t,1+\lambda\varphi_t\}$ & $\lambda=2.3$, $\kappa=5\times10^{-3}$ \\
\bottomrule
\end{tabular}
\end{table}

\subsection{Datasets and Evaluation Protocol}
\label{app:experimental-protocol}

\noindent\textbf{Training data.}
Every RL run trains on the 7{,}473 problems of the GSM8K training split \citep{cobbe2021gsm8k}. The prompt is the problem followed by an instruction to put the final answer in \verb|\boxed{}| (Appendix~\ref{app:experimental-prompts}), and the reward is one when math-verify judges the final answer equal to the reference answer and zero otherwise. The framework evaluates the GSM8K test split once per epoch for logging; these numbers are not used to select a checkpoint, and every number that we report comes from the final checkpoint.

\noindent\textbf{Measurement data.}
The static measurement of the mismatch uses 2{,}500 problems drawn at random from DAPO-Math-17k \citep{yu2025dapo}. This set is not used for training.

\noindent\textbf{Protocols.}
Table~\ref{tab:benchmarks} lists the evaluation benchmarks. Tables~\ref{tab:main} and~\ref{tab:ablation} use the chat template of each model with the instruction to put the final answer in \verb|\boxed{}|, greedy decoding with vLLM, at most 1{,}536 new tokens, and the math-verify judge. The out-of-domain evaluation of Table~\ref{tab:ood} uses the same template and greedy decoding; a multiple-choice item is scored by the option letter that the response states, with at most 512 new tokens, and a code item is scored by running the completion against the benchmark tests with a limit of 15 seconds, with at most 768 new tokens. Changing the tensor-parallel degree of the evaluation engine changes a few greedy decisions (two HumanEval and six MBPP items on one checkpoint), so every checkpoint of a table is evaluated with the same degree.

\begin{table}[H]
\centering
\caption{Evaluation benchmarks.}
\label{tab:benchmarks}
\footnotesize\setlength{\tabcolsep}{4pt}
\begin{tabularx}{\linewidth}{@{}>{\raggedright\arraybackslash}X l
  >{\raggedright\arraybackslash}p{3.1cm} l r@{}}
\toprule
Benchmark & Domain & Split & Scoring & Items \\
\midrule
GSM8K \citep{cobbe2021gsm8k} & math & test & math-verify & 1{,}319 \\
MATH-500 \citep{hendrycks2021math,lightman2024verify} & math & test subset & math-verify & 500 \\
SVAMP \citep{patel2021svamp} & math & test & math-verify & 300 \\
Minerva-Math \citep{lewkowycz2022minerva} & math & test & math-verify & 272 \\
OlympiadBench \citep{he2024olympiadbench} & math & English, text-only, open-ended & math-verify & 674 \\
\midrule
MMLU-STEM \citep{hendrycks2021mmlu} & knowledge & test, 19 STEM subjects & option letter & 3{,}153 \\
ARC-Challenge \citep{clark2018arc} & knowledge & test & option letter & 1{,}172 \\
OpenBookQA \citep{mihaylov2018openbookqa} & knowledge & test & option letter & 500 \\
HumanEval \citep{chen2021humaneval} & code & test & unit tests & 164 \\
MBPP \citep{austin2021mbpp} & code & test & unit tests & 500 \\
\bottomrule
\end{tabularx}
\end{table}

\subsection{Prompts}
\label{app:experimental-prompts}

Every prompt below is the user message, formatted with the chat template of the model and no additional system message; the thinking mode of Qwen3-30B-A3B is disabled. Placeholders in braces are filled with the fields of each item.

\begin{prompt}[title={Prompt: RL training on GSM8K}]
\texttt{\{question\}}\\
Please put your final answer within \verb|\boxed{}|.
\end{prompt}

\begin{prompt}[title={Prompt: evaluation on the five math benchmarks (Table~\ref{tab:main})}]
\texttt{\{question\}}\\[4pt]
Please reason step by step, and put your final answer within \verb|\boxed{}|.
\end{prompt}

\begin{prompt}[title={Prompt: out-of-domain multiple choice (MMLU-STEM, ARC-Challenge, OpenBookQA)}]
\texttt{\{question\}}\\[4pt]
A. \texttt{\{option A\}}\\
B. \texttt{\{option B\}}\\
C. \texttt{\{option C\}}\\
D. \texttt{\{option D\}}\\[4pt]
Answer with the letter of the correct option. Put the single letter within \verb|\boxed{}|.
\end{prompt}

\begin{prompt}[title={Prompt: out-of-domain code (HumanEval)}]
Complete the following Python function.\\[4pt]
\verb|```python|\\
\texttt{\{function signature and docstring\}}\\
\verb|```|\\[4pt]
Return the complete function in a single \verb|```python| code block. Do not include example usage or tests.
\end{prompt}

\begin{prompt}[title={Prompt: out-of-domain code (MBPP)}]
\texttt{\{problem description\}}\\[4pt]
Your code must pass these tests:\\
\verb|```python|\\
\texttt{\{three assert statements\}}\\
\verb|```|\\[4pt]
Return the complete function in a single \verb|```python| code block. Do not include example usage or tests.
\end{prompt}

\begin{prompt}[title={Prompt: measurement of the mismatch (DAPO-Math-17k)}]
\texttt{\{problem\}}
\end{prompt}

\subsection{Diagnostics}
\label{app:experimental-diagnostics}\label{app:diagnostics}

\noindent\textbf{Static measurement.}
Prompts are 2{,}500 problems drawn at random from DAPO-Math-17k. Generation uses vLLM 0.26 in bf16 with eager execution and without prefix caching; each prompt receives eight samples at temperature one and top-$p$ one, at most 1{,}024 tokens per response, and the log-probability of every sampled token is taken from the sampling step itself rather than from a later prefill pass. Rescoring uses HuggingFace Transformers in bf16 with teacher forcing on the exact sampled sequence and the same parameter snapshot; the token at position $t$ is scored from the logits at position $t-1$ on both sides. For the MoE model we record the top-4 expert identities of every layer on both sides and the gate margin between the fourth and fifth expert; the routing read from forward hooks agrees with the identities returned by the fused kernel on $100\%$ of the (token, layer) pairs whose margin is positive. Two controls fix the measurement floor: rescoring a sequence twice on the training engine returns exactly zero difference, and the empirical mean of $k_t$ is $1.0000$ on both architectures, which verifies the per-step calibration identity $\E[k_t\mid h_t]=1$. Two further controls accompany the main measurement: rescoring in fp32, and rescoring by a prefill pass on the inference engine, which reproduces the decode-time bulk. The displacement is computed as $\varepsilon_t=\operatorname{logit}p_t-\operatorname{logit}q_t$ from the two log-probabilities.

\noindent\textbf{Training-time logging.}
The training runs dump, every ten optimizer calls, the rollout-side log-probability, the training-side log-probability at the start of the update, and the parameter version of the rollout for a sample of tokens. Version staleness is the difference between the current version and the rollout version.

\subsection{Preregistered Evaluation Contract}
\label{app:experimental-contract}

Held-out accuracy at the final checkpoint is the arbiter of every comparison; training reward is reported but is not used to choose between methods. The following choices were fixed before the corresponding runs. The floor is $\kappa=5\times10^{-3}$, set from the storage resolution of the log-probabilities before any training run. The slope grid is $\lambda\in\{1.6,2.3,3.1\}$. All arms use the recipe of Section~\ref{sec:experiments-setup} without per-arm tuning. Every ablation changes one element of the operator relative to the deployed configuration. The main comparison and the ablations of Table~\ref{tab:ablation} use three seeds.

\section{Additional Results and Limitations}
\label{app:discussion}

\subsection{Training Dynamics}
\label{app:experimental-dynamics}

\begin{figure}[!htbp]
\centering
\includegraphics[width=\linewidth]{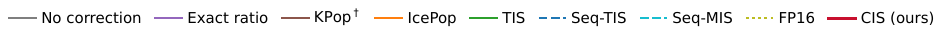}\\[3pt]
\begin{minipage}[t]{0.325\linewidth}\centering
\includegraphics[width=\linewidth]{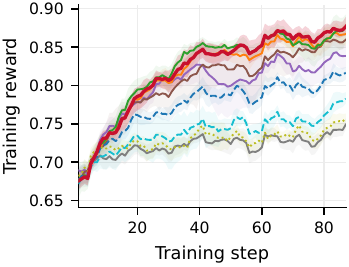}\\[-2pt]{\small (a)}
\end{minipage}\hfill
\begin{minipage}[t]{0.325\linewidth}\centering
\includegraphics[width=\linewidth]{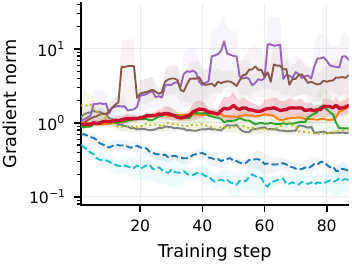}\\[-2pt]{\small (b)}
\end{minipage}\hfill
\begin{minipage}[t]{0.325\linewidth}\centering
\includegraphics[width=\linewidth]{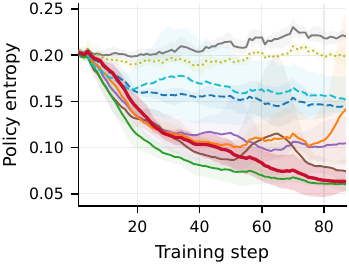}\\[-2pt]{\small (c)}
\end{minipage}\\[4pt]
\begin{minipage}[t]{0.325\linewidth}\centering
\includegraphics[width=\linewidth]{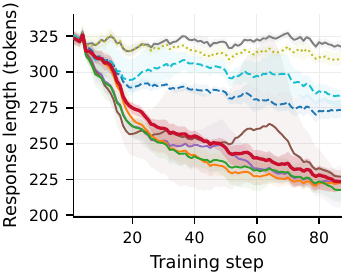}\\[-2pt]{\small (d)}
\end{minipage}\hfill
\begin{minipage}[t]{0.325\linewidth}\centering
\includegraphics[width=\linewidth]{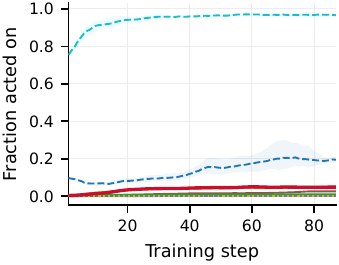}\\[-2pt]{\small (e)}
\end{minipage}\hfill
\begin{minipage}[t]{0.325\linewidth}\centering
\includegraphics[width=\linewidth]{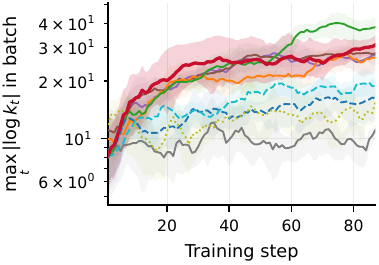}\\[-2pt]{\small (f)}
\end{minipage}
\caption{Training dynamics on Qwen1.5-MoE-A2.7B (mean over seeds, band spanning the seeds, five-step smoothing). (a) Reward. (b) Gradient norm. (c) Entropy. (d) Response length. (e) Fraction of tokens acted on, counting whole responses for Seq-TIS and Seq-MIS. (f) Largest $|\log k_t|$ in the batch.}
\label{fig:dynamics}
\end{figure}

\begin{table}[!htbp]
\centering
\caption{Training statistics on Qwen1.5-MoE-A2.7B, mean $\pm$ standard deviation over three seeds: third-epoch mean (median for the gradient norm and $\max_t|\log k_t|$), and peak gradient norm and fraction acted on over the whole run.}
\label{tab:train_stats}
\scriptsize\setlength{\tabcolsep}{2pt}
\renewcommand{\arraystretch}{0.95}
\setlength{\aboverulesep}{0.7pt}\setlength{\belowrulesep}{0.7pt}
\begin{tabularx}{\linewidth}{l|*{7}{>{\centering\arraybackslash}X}}
\toprule
Method & Reward & Grad.\ norm & Peak grad. & Entropy & Length & Acted on (\%) & $\max_t|\log k_t|$ \\
\midrule
No correction & $0.732_{\pm0.004}$ & $0.75_{\pm0.02}$ & $2.5_{\pm0.4}$ & $0.220_{\pm0.005}$ & $322_{\pm2}$ & $0.00_{\pm0.00}$ & $9.5_{\pm0.5}$ \\
Exact ratio & $0.829_{\pm0.012}$ & $3.58_{\pm0.75}$ & $69.5_{\pm41.4}$ & $0.102_{\pm0.013}$ & $227_{\pm4}$ & $0.06_{\pm0.03}$ & $25.9_{\pm3.4}$ \\
KPop$^\dagger$ & $0.847_{\pm0.016}$ & $3.09_{\pm0.76}$ & $29.2_{\pm23.5}$ & $0.093_{\pm0.033}$ & $245_{\pm29}$ & $1.33_{\pm0.05}$ & $27.3_{\pm1.7}$ \\
IcePop & $0.859_{\pm0.005}$ & $1.12_{\pm0.04}$ & $2.8_{\pm1.5}$ & $0.113_{\pm0.035}$ & $224_{\pm7}$ & $3.64_{\pm0.40}$ & $24.0_{\pm2.3}$ \\
TIS ($C=2$) & $0.861_{\pm0.014}$ & $0.97_{\pm0.11}$ & $3.6_{\pm1.6}$ & $0.064_{\pm0.004}$ & $226_{\pm9}$ & $0.76_{\pm0.07}$ & $37.7_{\pm4.4}$ \\
TIS ($C=1.13$) & $0.780_{\pm0.065}$ & $0.91_{\pm0.08}$ & $8.5_{\pm11.7}$ & $0.200_{\pm0.142}$ & $224_{\pm23}$ & $4.32_{\pm0.64}$ & $29.7_{\pm4.4}$ \\
\midrule
Seq-TIS & $0.803_{\pm0.004}$ & $0.25_{\pm0.02}$ & $0.8_{\pm0.0}$ & $0.147_{\pm0.016}$ & $276_{\pm7}$ & $13.54_{\pm2.83}$ & $15.6_{\pm2.9}$ \\
Seq-MIS & $0.759_{\pm0.002}$ & $0.15_{\pm0.01}$ & $0.6_{\pm0.2}$ & $0.158_{\pm0.025}$ & $292_{\pm9}$ & $94.56_{\pm0.50}$ & $18.2_{\pm1.7}$ \\
FP16 & $0.740_{\pm0.002}$ & $0.74_{\pm0.01}$ & $3.2_{\pm1.3}$ & $0.202_{\pm0.002}$ & $313_{\pm3}$ & $0.00_{\pm0.00}$ & $12.7_{\pm2.1}$ \\
\midrule
\rowcolor{blue!8}
CIS (ours) & $\mathbf{0.867}_{\pm0.009}$ & $1.47_{\pm0.31}$ & $3.0_{\pm1.1}$ & $0.071_{\pm0.017}$ & $232_{\pm8}$ & $3.81_{\pm0.42}$ & $26.9_{\pm5.1}$ \\
\bottomrule
\end{tabularx}
\end{table}

Figure~\ref{fig:dynamics} and Table~\ref{tab:train_stats} show how the arms of Table~\ref{tab:main} train on Qwen1.5-MoE. The token-level operators that bound the ratio (TIS, IcePop, and CIS) reach a training reward between $0.859$ and $0.867$ with a peak gradient norm below four, whereas the exact ratio and KPop$^\dagger$ reach a lower reward with peak gradient norms of $70$ and $29$. The uncorrected run and the fp16 run stay near a reward of $0.74$, and their entropy and response length stay close to their initial values. Seq-MIS removes $95\%$ of the responses, and its gradient norm is the smallest of all arms because most of the batch carries no gradient. The largest $|\log k_t|$ in the batch stays near $10$ for the uncorrected and fp16 runs and grows to between $16$ and $38$ for the other arms as the policy sharpens. The fraction of tokens that CIS acts on rises during the first third of training and then stays between $4\%$ and $5\%$ (panel e).

\subsection{Hyperparameter Sweeps}
\label{app:experimental-sweeps}

Tables~\ref{tab:sweep_lambda} and~\ref{tab:sweep_kappa} list the per-benchmark accuracy behind \Cref{fig:hyperparameter}. Each sweep varies one hyperparameter of CIS on Qwen1.5-MoE-A2.7B with the other fixed at its default, and uses the recipe and evaluation protocol of \Cref{tab:main}; the default rows are the CIS entries of \Cref{tab:main}. Setting $\lambda=0$ truncates every ratio above one, which is TIS with $C=1$, and the average accuracy collapses to $8.75$ with a large spread across seeds. Every positive threshold stays above the uncorrected run ($30.99$), and the default $\lambda=2.3$ gives the highest average ($34.78$), while the other positive thresholds lie between $32.53$ and $34.08$.

\begin{table}[H]
\centering
\caption{Accuracy (\%) of CIS on Qwen1.5-MoE-A2.7B for different thresholds $\lambda$ with $\kappa=5\times10^{-3}$. Mean $\pm$ standard deviation over three seeds.}
\label{tab:sweep_lambda}
\scriptsize\setlength{\tabcolsep}{2pt}
\renewcommand{\arraystretch}{0.95}
\setlength{\aboverulesep}{0.7pt}\setlength{\belowrulesep}{0.7pt}
\begin{tabularx}{\linewidth}{l|*{5}{>{\centering\arraybackslash}X}|>{\centering\arraybackslash}X}
\toprule
$\lambda$ & GSM8K & MATH500 & SVAMP & Minerva & Olympiad & Avg \\
\midrule
$0$ & $14.71_{\pm13.47}$ & $6.80_{\pm5.67}$ & $19.33_{\pm17.17}$ & $0.98_{\pm1.39}$ & $1.93_{\pm1.12}$ & $8.75_{\pm7.44}$ \\
$0.45$ & $64.82_{\pm1.88}$ & $23.33_{\pm2.44}$ & $70.33_{\pm2.08}$ & $7.11_{\pm0.21}$ & $3.66_{\pm0.17}$ & $33.85_{\pm0.27}$ \\
$0.89$ & $66.59_{\pm1.95}$ & $22.87_{\pm1.75}$ & $70.22_{\pm1.90}$ & $6.25_{\pm1.33}$ & $4.45_{\pm0.83}$ & $34.08_{\pm0.19}$ \\
$1.6$ & $65.68_{\pm0.46}$ & $23.33_{\pm1.63}$ & $68.78_{\pm2.17}$ & $6.74_{\pm0.56}$ & $4.90_{\pm0.59}$ & $33.89_{\pm0.46}$ \\
\rowcolor{blue!8}
$2.3$ (default) & $67.88_{\pm1.64}$ & $24.53_{\pm1.03}$ & $70.22_{\pm1.35}$ & $5.88_{\pm0.64}$ & $5.39_{\pm0.62}$ & $34.78_{\pm0.25}$ \\
$3.1$ & $65.07_{\pm0.70}$ & $23.67_{\pm0.92}$ & $67.89_{\pm2.50}$ & $6.13_{\pm1.49}$ & $4.01_{\pm0.90}$ & $33.35_{\pm0.66}$ \\
$4.6$ & $64.54_{\pm1.19}$ & $24.20_{\pm0.53}$ & $69.22_{\pm1.26}$ & $6.86_{\pm0.21}$ & $3.46_{\pm0.09}$ & $33.66_{\pm0.64}$ \\
$9.2$ & $62.27_{\pm1.67}$ & $23.40_{\pm0.60}$ & $66.89_{\pm1.84}$ & $6.74_{\pm0.21}$ & $3.36_{\pm0.09}$ & $32.53_{\pm0.87}$ \\
\bottomrule
\end{tabularx}
\end{table}

For the floor, the accuracy peaks near the storage resolution of the log-probabilities, where $\kappa=5\times10^{-3}$ and $\kappa=2\times10^{-2}$ reach $34.78$ and $34.39$; a smaller floor of $10^{-3}$ loses $1.4$ points, and removing the floor or raising it to $0.1$ lowers the average to $29.02$ and $32.17$. The gains of the default floor are largest on GSM8K and OlympiadBench.

\begin{table}[H]
\centering
\caption{Accuracy (\%) of CIS on Qwen1.5-MoE-A2.7B for different floors $\kappa$ with $\lambda=2.3$. Mean $\pm$ standard deviation over three seeds.}
\label{tab:sweep_kappa}
\scriptsize\setlength{\tabcolsep}{2pt}
\renewcommand{\arraystretch}{0.95}
\setlength{\aboverulesep}{0.7pt}\setlength{\belowrulesep}{0.7pt}
\begin{tabularx}{\linewidth}{l|*{5}{>{\centering\arraybackslash}X}|>{\centering\arraybackslash}X}
\toprule
$\kappa$ & GSM8K & MATH500 & SVAMP & Minerva & Olympiad & Avg \\
\midrule
$0$ & $55.62_{\pm2.69}$ & $20.87_{\pm1.10}$ & $59.78_{\pm2.83}$ & $5.88_{\pm0.37}$ & $2.97_{\pm0.15}$ & $29.02_{\pm1.43}$ \\
$10^{-3}$ & $64.01_{\pm0.65}$ & $24.00_{\pm0.20}$ & $68.67_{\pm0.67}$ & $6.86_{\pm0.21}$ & $3.51_{\pm0.09}$ & $33.41_{\pm0.36}$ \\
\rowcolor{blue!8}
$5\times10^{-3}$ (default) & $67.88_{\pm1.64}$ & $24.53_{\pm1.03}$ & $70.22_{\pm1.35}$ & $5.88_{\pm0.64}$ & $5.39_{\pm0.62}$ & $34.78_{\pm0.25}$ \\
$2\times10^{-2}$ & $65.93_{\pm2.31}$ & $24.73_{\pm0.95}$ & $70.56_{\pm2.46}$ & $7.11_{\pm0.21}$ & $3.61_{\pm0.09}$ & $34.39_{\pm1.20}$ \\
$0.1$ & $64.42_{\pm1.67}$ & $22.60_{\pm4.06}$ & $64.44_{\pm1.95}$ & $4.90_{\pm0.85}$ & $4.50_{\pm0.37}$ & $32.17_{\pm1.28}$ \\
\bottomrule
\end{tabularx}
\end{table}

\subsection{Out-of-Domain Evaluation}
\label{app:experimental-ood}

\begin{table}[H]
\centering
\caption{Out-of-domain accuracy (\%) on Qwen1.5-MoE-A2.7B after RL on GSM8K. Mean $\pm$ standard deviation over three seeds; Science, Code, and All are item-weighted over the knowledge, code, and all benchmarks.}
\label{tab:ood}
\scriptsize\setlength{\tabcolsep}{1.6pt}
\renewcommand{\arraystretch}{0.88}
\setlength{\aboverulesep}{0.7pt}\setlength{\belowrulesep}{0.7pt}
\begin{tabularx}{\linewidth}{l|*{3}{>{\centering\arraybackslash}X}|*{2}{>{\centering\arraybackslash}X}|*{3}{>{\centering\arraybackslash}X}}
\toprule
Method & MMLU & ARC-C & OBQA & HumanEval & MBPP & Science & Code & All \\
\midrule
Base model (no RL) & $47.95$ & $72.61$ & $66.60$ & $44.51$ & $33.40$ & $55.88$ & $36.14$ & $53.49$ \\
No correction & $49.96_{\pm0.36}$ & $72.90_{\pm0.65}$ & $69.67_{\pm0.42}$ & $48.37_{\pm1.86}$ & $33.67_{\pm1.40}$ & $57.58_{\pm0.23}$ & $37.30_{\pm1.52}$ & $55.12_{\pm0.34}$ \\
\cmidrule(lr){1-9}
Exact ratio & $50.49_{\pm2.21}$ & $67.49_{\pm5.67}$ & $69.20_{\pm2.11}$ & $50.00_{\pm1.22}$ & $33.27_{\pm1.21}$ & $56.56_{\pm3.00}$ & $37.40_{\pm1.11}$ & $54.24_{\pm2.56}$ \\
KPop$^\dagger$ & $51.33_{\pm0.23}$ & $71.87_{\pm1.08}$ & $71.07_{\pm1.21}$ & $50.20_{\pm3.57}$ & $33.33_{\pm1.89}$ & $58.36_{\pm0.33}$ & $37.50_{\pm2.19}$ & $55.84_{\pm0.25}$ \\
IcePop & $51.79_{\pm0.40}$ & $72.81_{\pm1.44}$ & $71.40_{\pm2.42}$ & $52.64_{\pm0.35}$ & $35.40_{\pm1.64}$ & $58.93_{\pm0.10}$ & $39.66_{\pm1.30}$ & $56.60_{\pm0.09}$ \\
TIS & $48.76_{\pm1.81}$ & $69.77_{\pm6.27}$ & $70.87_{\pm1.79}$ & $48.17_{\pm1.22}$ & $33.47_{\pm1.60}$ & $56.15_{\pm2.85}$ & $37.10_{\pm1.51}$ & $53.85_{\pm2.32}$ \\
\cmidrule(lr){1-9}
Seq-TIS & $51.43_{\pm0.65}$ & $\mathbf{73.41}_{\pm0.57}$ & $71.00_{\pm1.91}$ & $50.20_{\pm2.75}$ & $\mathbf{36.13}_{\pm1.33}$ & $58.80_{\pm0.56}$ & $39.61_{\pm0.45}$ & $56.48_{\pm0.44}$ \\
Seq-MIS & $50.45_{\pm1.57}$ & $72.35_{\pm0.92}$ & $70.80_{\pm1.59}$ & $50.81_{\pm1.86}$ & $35.87_{\pm0.83}$ & $57.88_{\pm1.28}$ & $39.56_{\pm0.92}$ & $55.66_{\pm1.02}$ \\
GSPO & $50.32_{\pm1.82}$ & $72.61_{\pm1.19}$ & $70.67_{\pm3.29}$ & $44.31_{\pm3.36}$ & $32.53_{\pm1.30}$ & $57.84_{\pm1.72}$ & $35.44_{\pm1.76}$ & $55.13_{\pm1.69}$ \\
FP16 & $49.19_{\pm0.70}$ & $73.27_{\pm0.26}$ & $68.67_{\pm0.42}$ & $48.78_{\pm3.23}$ & $34.47_{\pm1.17}$ & $57.06_{\pm0.49}$ & $38.00_{\pm0.86}$ & $54.75_{\pm0.44}$ \\
\cmidrule(lr){1-9}
\rowcolor{blue!8}
CIS (ours) & $\mathbf{51.87}_{\pm0.25}$ & $72.92_{\pm1.28}$ & $\mathbf{72.13}_{\pm1.75}$ & $\mathbf{53.25}_{\pm2.46}$ & $35.80_{\pm2.40}$ & $\mathbf{59.08}_{\pm0.52}$ & $\mathbf{40.11}_{\pm2.24}$ & $\mathbf{56.79}_{\pm0.71}$ \\
\bottomrule
\end{tabularx}
\end{table}

Table~\ref{tab:ood} evaluates the checkpoints of Table~\ref{tab:main} on Qwen1.5-MoE-A2.7B on three knowledge benchmarks (MMLU-STEM, ARC-Challenge, and OpenBookQA) and two code benchmarks (HumanEval and MBPP), none of which the RL stage uses. Every trained arm ends above the untrained model on the $5{,}489$ pooled items, so RL on GSM8K does not erase knowledge or coding ability. CIS has the highest pooled accuracy ($56.79$), but its margin over IcePop ($56.60$) is smaller than the standard deviations, so we do not claim a separation between the two. The exact ratio and TIS have the widest spreads on the pooled accuracy ($\pm2.56$ and $\pm2.32$), as they do in Table~\ref{tab:main}.

\subsection{Computational Cost}
\label{app:experimental-cost}

\begin{figure}[!htbp]
\centering
\begin{minipage}[t]{0.655\linewidth}\centering
\includegraphics[width=\linewidth]{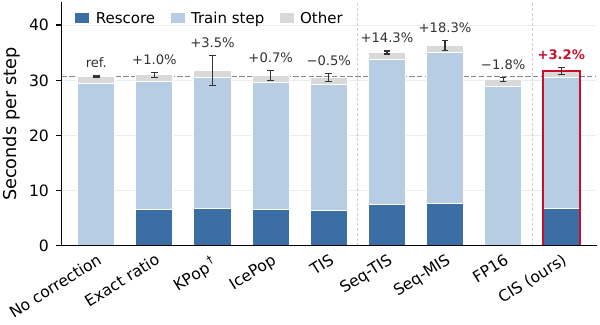}\\[-2pt]{\small (a)}
\end{minipage}\hfill
\begin{minipage}[t]{0.325\linewidth}\centering
\includegraphics[width=\linewidth]{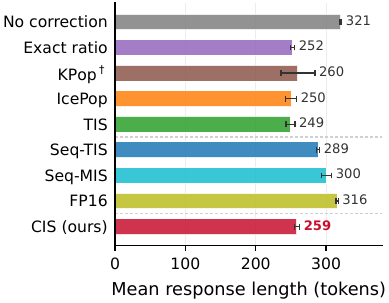}\\[-2pt]{\small (b)}
\end{minipage}
\caption{Compute time per step on Qwen1.5-MoE-A2.7B, excluding checkpointing and evaluation. (a) Time by phase; dashed: no correction. (b) Mean response length.}
\label{fig:cost}
\end{figure}

\begin{table}[!htbp]
\centering
\caption{Compute time per optimizer step (seconds) on eight A100 GPUs, mean $\pm$ standard deviation over three seeds. Rescore is the training-engine forward pass that produces the training-side log-probabilities.}
\label{tab:cost}
\scriptsize\setlength{\tabcolsep}{2pt}
\renewcommand{\arraystretch}{0.95}
\setlength{\aboverulesep}{0.7pt}\setlength{\belowrulesep}{0.7pt}
\begin{tabularx}{\linewidth}{l|*{7}{>{\centering\arraybackslash}X}}
\toprule
Method & Rollout & Rescore & Train step & Advantage & Total & vs.\ no corr. & Length \\
\midrule
No correction & $0.44_{\pm0.02}$ & $0.00_{\pm0.00}$ & $29.41_{\pm0.08}$ & $0.82_{\pm0.02}$ & $30.66_{\pm0.09}$ & --- & $321_{\pm1}$ \\
Exact ratio & $0.48_{\pm0.03}$ & $6.50_{\pm0.08}$ & $23.26_{\pm0.40}$ & $0.72_{\pm0.01}$ & $30.97_{\pm0.48}$ & $+1.0\%$ & $252_{\pm3}$ \\
KPop$^\dagger$ & $0.45_{\pm0.01}$ & $6.64_{\pm0.58}$ & $23.90_{\pm2.08}$ & $0.74_{\pm0.03}$ & $31.73_{\pm2.68}$ & $+3.5\%$ & $260_{\pm24}$ \\
IcePop & $0.49_{\pm0.03}$ & $6.51_{\pm0.23}$ & $23.12_{\pm0.67}$ & $0.77_{\pm0.06}$ & $30.88_{\pm0.93}$ & $+0.7\%$ & $250_{\pm8}$ \\
TIS & $0.48_{\pm0.03}$ & $6.40_{\pm0.13}$ & $22.89_{\pm0.56}$ & $0.74_{\pm0.01}$ & $30.51_{\pm0.69}$ & $-0.5\%$ & $249_{\pm6}$ \\
\midrule
Seq-TIS & $0.49_{\pm0.02}$ & $7.39_{\pm0.04}$ & $26.44_{\pm0.24}$ & $0.72_{\pm0.01}$ & $35.03_{\pm0.28}$ & $+14.3\%$ & $289_{\pm2}$ \\
Seq-MIS & $0.49_{\pm0.00}$ & $7.61_{\pm0.23}$ & $27.44_{\pm0.66}$ & $0.74_{\pm0.01}$ & $36.28_{\pm0.89}$ & $+18.3\%$ & $300_{\pm7}$ \\
FP16 & $0.46_{\pm0.03}$ & $0.00_{\pm0.00}$ & $28.81_{\pm0.34}$ & $0.84_{\pm0.05}$ & $30.11_{\pm0.40}$ & $-1.8\%$ & $316_{\pm2}$ \\
\midrule
\rowcolor{blue!8}
CIS (ours) & $0.49_{\pm0.03}$ & $6.67_{\pm0.13}$ & $23.77_{\pm0.48}$ & $0.72_{\pm0.01}$ & $31.65_{\pm0.62}$ & $+3.2\%$ & $259_{\pm4}$ \\
\bottomrule
\end{tabularx}
\end{table}

Every operator is an elementwise map of the rollout-side and training-side log-probabilities, so no operator adds a forward or backward pass relative to the others. The arms that use the decoupled loss run one rescoring pass of the training engine per step (6.4 to 7.6 seconds), and the train step of these arms is shorter by a similar amount; the uncorrected and fp16 arms do not rescore. Figure~\ref{fig:cost} and Table~\ref{tab:cost} report the resulting compute time per step. The token-level operators lie between $-0.5\%$ and $+3.5\%$ of the uncorrected run, and CIS costs $3.2\%$ more. Seq-TIS and Seq-MIS cost $14\%$ and $18\%$ more, but they also generate longer responses (289 and 300 tokens against 249 to 260 for the token-level operators), and response length changes the step time independently of the operator. Normalizing by the number of tokens does not remove this confound either, because the uncorrected run processes the most tokens per step and is therefore the fastest per token. The wall-clock time between steps also includes checkpointing and evaluation, which differ between runs by up to 25 seconds per step because of how often each run was scheduled to evaluate, not because of the operator; we exclude them here.

\subsection{Where Each Operator Acts}
\label{app:experimental-masking}\label{app:experimental-results}

\begin{figure}[!htbp]
\centering
\includegraphics[width=0.8\linewidth]{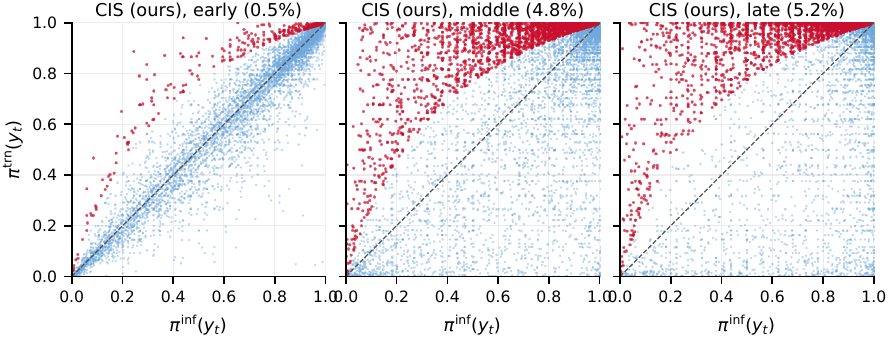}\\[-1pt]{\small (a)}\\[4pt]
\includegraphics[width=0.8\linewidth]{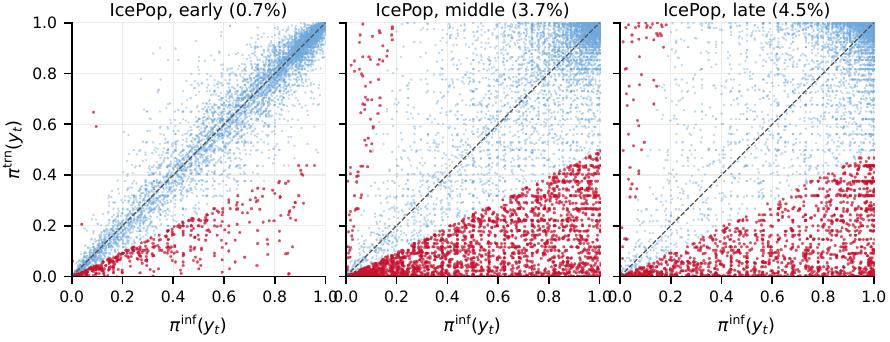}\\[-1pt]{\small (b)}\\[4pt]
\includegraphics[width=0.8\linewidth]{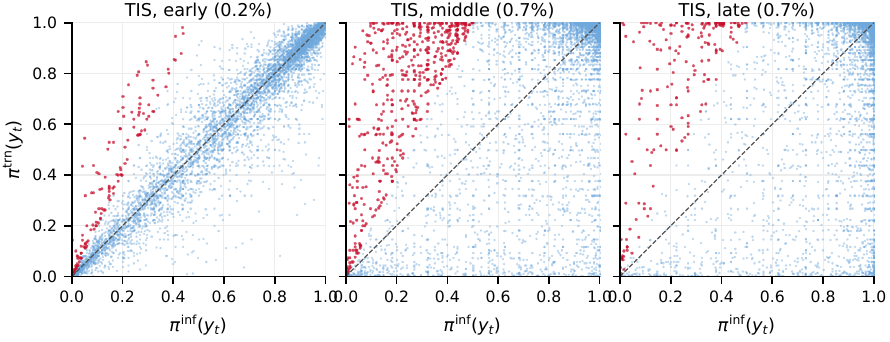}\\[-1pt]{\small (c)}\\[4pt]
\includegraphics[width=0.8\linewidth]{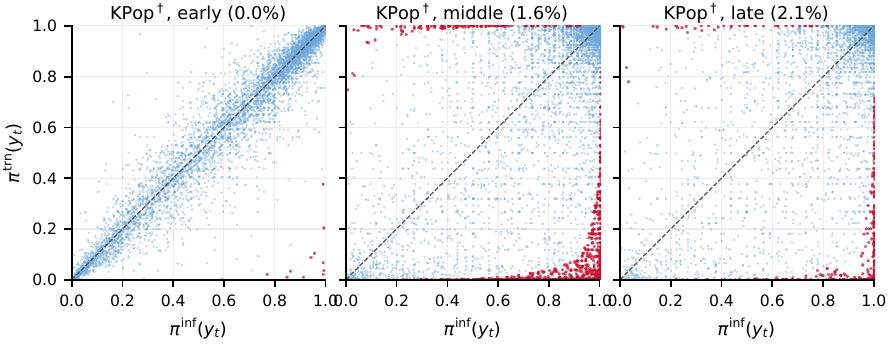}\\[-1pt]{\small (d)}
\caption{Tokens kept (blue) and truncated or masked (red) in the run of each operator (seed two), early, middle, and late in training; panel titles give the share acted on. (a) CIS. (b) IcePop. (c) TIS. (d) KPop$^\dagger$.}
\label{fig:masking}
\end{figure}

Figure~\ref{fig:masking} shows which tokens each operator removes or down-weights during training. Early, middle, and late are the first, middle, and last twentieth of training; kept tokens are subsampled to $25{,}000$ per panel, and between $68\%$ and $85\%$ of all tokens have $\pi^{\mathrm{trn}}(y_t)>0.99$ and overlap in the upper right corner. Early in training the two probabilities agree closely and every operator acts on less than $1\%$ of tokens. As the policy sharpens, the spread around the diagonal grows. The operators then act on different regions of the plane. CIS truncates only above the diagonal, and its truncated tokens concentrate at $\pi^{\mathrm{trn}}(y_t)\to1$, where the training engine is confident and the displacement exceeds the threshold. This concentration does not mean that the floor does most of the work. Applying \Cref{alg:cis} to the logged tokens of the three CIS runs, about one third of the truncated tokens lie below the floor, where the cap equals $1+\lambda\kappa\approx1.01$, and these tokens account for only about $10\%$ of the total truncated excess $\sum_t(k_t-f_t)$; more than half of it comes from tokens with $1-p_t\ge0.1$. IcePop masks mainly below the diagonal, on tokens whose ratio falls under $0.5$, which the one-sided band of CIS leaves untouched. TIS caps the tokens whose training-side probability exceeds twice the inference-side probability, which lie mostly at low $\pi^{\mathrm{inf}}(y_t)$. KPop$^\dagger$ masks tokens near the edges of the plane, where one engine assigns a probability close to one and the other does not, at a rate that grows to $2.1\%$ late in training. CIS and IcePop therefore act on complementary sets of tokens.

\noindent\textbf{The KL mask and probability saturation.}
The binary-KL metric of the KPop baseline is $\mathrm{KL}(\mathrm{Bern}(p)\,\|\,\mathrm{Bern}(q))=p\log(p/q)+(1-p)\log((1-p)/(1-q))$ evaluated in both directions, with $p$ and $q$ clamped to $[10^{-8},1-10^{-8}]$. In single precision the upper clamp is the identity, so a token whose probability is exactly one on both engines yields $0\cdot\log(0/0)$, and a token whose probability is exactly one on one engine yields $+\infty$; the bounds check treats both as out of bounds. On the tokens logged in the KPop run, $16.5\%$ have probability exactly one on both engines and $2.8\%$ on one engine, which together with the $0.45\%$ of tokens whose finite metric exceeds the threshold reproduces the $22\%$ intervention rate that the framework reports. Recomputing the metric in double precision with a representable clamp and mapping the undefined case to zero gives a mask rate of $0.45\%$. The main text reports this corrected variant as KPop$^\dagger$, so that the comparison is not driven by probability saturation.\Cref{fig:masking} counts tokens, so the truncations of CIS appear
concentrated at high confidence: $83\%$ of the tokens that CIS truncates have
$p_t\ge0.9$, compared with $25\%$ under TIS. By removed weight, however, more
than half of the truncation of CIS comes from tokens with $p_t<0.9$.

\subsection{Limitations}
\label{app:scope}\label{app:limitations}

\noindent\textbf{Models and engines.}
We train only mixture-of-experts models, on which the mismatch has a heavy tail (\Cref{app:mismatch_anatomy}). On the dense control the displacement is concentrated near zero, so we expect every correction to matter less there, but we do not train a dense model. The measurements and the training runs use bf16 and specific inference and training engines; other kernels or numerical formats can change the size of the mismatch, in which case $\lambda$ may need to be retuned. In asynchronous training, $k_t$ also contains the policy lag of stale rollouts, which our analysis does not model separately; stale tokens make up less than $1\%$ of the tokens in our runs.

\noindent\textbf{Hyperparameters and baselines.}
The threshold $\lambda$ and the floor $\kappa$ are chosen from a preregistered grid on Qwen1.5-MoE-A2.7B and kept fixed for the other two models, whereas every baseline uses the default hyperparameters of its reference implementation. A comparable sweep for each baseline, and combinations of CIS with methods that reduce the mismatch at its source, such as routing replay~\citep{ma2025r3} or FP16 training~\citep{qi2025fp16}, are left to future work.

\noindent\textbf{Scope of the theory.}
\Cref{thm:exact_risk,thm:cis_risk} give upper bounds under bounded per-token gradients and independent token contributions; for correlated contributions, the sample size $n$ is replaced by $n_{\mathrm{eff}}=n/(1+(n-1)\bar\rho)$ (\Cref{app:proofs}). Any cap with a finite maximum admits a bound similar to \Cref{thm:cis_risk}, so the theory explains why truncation is needed but does not by itself rank CIS against other bounded caps; that comparison rests on the experiments of \Cref{sec:experiments}.

\noindent\textbf{What the token-level correction does not correct.}
The target gradient $G_0$ corrects the conditional distribution of each token given its prefix and not the distribution of the prefix itself. Neither the exact ratio nor CIS removes the shift in the prefix distribution. Seq-TIS and Seq-MIS act on the product of the ratios over a response, which addresses the prefix, but the product leaves any fixed interval for most responses of a few hundred tokens: Seq-MIS rejects $95\%$ of the responses in our runs (Table~\ref{tab:train_stats}), and Seq-TIS caps $12$ to $17\%$ of them.

\end{document}